\newcommand{\eat}[1]{}
\newtheorem{theorem}{Theorem}
\newtheorem{lemma}[theorem]{Lemma}
\newtheorem{definition}{Definition}
\newtheorem{corollary}{Corollary}
\def\eqref#1{equation~\ref{#1}}
\def\Eqref#1{Equation~\ref{#1}}
\def\1{\bm{1}}
\def\vd{{\bm{d}}}
\def\vh{{\bm{h}}}
\def\vx{{\bm{x}}}
\def\mA{{\bm{A}}}
\def\mC{{\bm{C}}}
\def\mD{{\bm{D}}}
\def\mE{{\bm{E}}}
\def\mH{{\bm{H}}}
\def\mI{{\bm{I}}}
\def\mW{{\bm{W}}}
\def\mX{{\bm{X}}}
\def\mY{{\bm{Y}}}
\DeclareMathAlphabet{\mathsfit}{\encodingdefault}{\sfdefault}{m}{sl}
\SetMathAlphabet{\mathsfit}{bold}{\encodingdefault}{\sfdefault}{bx}{n}
\def\gG{{\mathcal{G}}}
\def\gM{{\mathcal{M}}}
\def\sN{{\mathbb{N}}}
\def\sV{{\mathbb{V}}}
\newcommand{\R}{\mathbb{R}}
\title{DropEdge: Towards Deep Graph Convolutional Networks on Node Classification}
\author{
  Yu Rong$^{1}$, Wenbing Huang$^{2}\thanks{Wenbing Huang is the corresponding author.}$, ~Tingyang Xu$^1$, Junzhou Huang$^1$ \\
 $^1$ Tencent AI Lab \\
 $^2$ Beijing National Research Center for Information Science and Technology (BNRist),\\
 State Key Lab on Intelligent Technology and Systems,\\
 Department of Computer Science and Technology, Tsinghua University\\
 \texttt{yu.rong@hotmail.com, hwenbing@126.com}\\
 \texttt{tingyangxu@tencent.com, jzhuang@uta.edu} \\
 }
\begin{document}

\maketitle

\begin{abstract}
\emph{Over-fitting} and \emph{over-smoothing} are two main obstacles of developing deep Graph Convolutional Networks (GCNs) for node classification. In particular, over-fitting weakens the generalization ability on small dataset, while over-smoothing impedes model training by isolating output representations from the input features with the increase in network depth. This paper proposes DropEdge, a novel and flexible technique to alleviate both issues. At its core, DropEdge randomly removes a certain number of edges from the input graph at each training epoch, acting like a data augmenter and also a message passing reducer. Furthermore, we theoretically demonstrate that DropEdge either reduces the convergence speed of over-smoothing or relieves the information loss caused by it. More importantly, our DropEdge is a general skill that can be equipped with many other backbone models (\emph{e.g.} GCN, ResGCN, GraphSAGE, and JKNet) for enhanced performance. Extensive experiments on several benchmarks verify that DropEdge consistently improves the performance on a variety of both shallow and deep GCNs. The effect of DropEdge on preventing over-smoothing is empirically visualized and validated as well. Codes are released on~\url{https://github.com/DropEdge/DropEdge}.

% This paper proposes DropEdge, a novel and flexible technique to alleviate over-smoothing and overfitting issue in deep Graph Convolutional Networks.

% Existing Graph Convolutional Networks (GCNs) are shallow---the number of the layers is usually not larger than 2. The deeper variants by simply stacking more layers, unfortunately perform worse, even involving well-known tricks like weight penalizing, dropout, and residual connections. This paper reveals that developing deep GCNs mainly encounters two obstacles: \emph{over-fitting} and \emph{over-smoothing}. The over-fitting issue weakens the generalization ability on small dataset, while over-smoothing impedes model training by isolating output representations from the input features with the increase in network depth. Hence, we propose DropEdge, a novel technique to alleviate both issues. At its core, DropEdge randomly removes a certain number of edges from the input graphs, acting like a data augmenter and also a message passing reducer. More importantly, DropEdge enables us to recast a wider range of Convolutional Neural Networks (CNNs) from the image field to the graph domain; in particular, we study various structures of GCN in this paper. Extensive experiments on several benchmarks demonstrate that our method allows deep GCNs to achieve promising performance, even when the number of layers exceeds 64---the deepest GCN that has ever been proposed.
\end{abstract}

\section{Introduction}

%By imitating the idea of Convolutional Neural Networks (CNNs) that operate on images,
Graph Convolutional Networks (GCNs), which exploit message passing or equivalently certain neighborhood aggregation function to extract high-level features from a node as well as its neighborhoods, have boosted the state-of-the-arts for a variety of tasks on graphs, such as node classification~\citep{Bhagat2011,Zhang2018}, social recommendation~\citep{Freeman2000,perozzi2014deepwalk}, and link prediction~\citep{Liben-Nowell2007} to name some. In other words, GCNs have been becoming one of the most crucial tools for graph representation learning.
Yet, when we revisit typical GCNs on node classification~\citep{Kipf2017}, they are usually shallow (\emph{e.g.} the number of the layers is 2\footnote{When counting the number of layers (or network depth) of GCN, this paper does not involve the input layer.}). Inspired from the success of deep CNNs on image classification, several attempts have been proposed to explore how to build deep GCNs towards node classification~\citep{Kipf2017,Li2018,Xu2018, li2019can}; nevertheless, none of them delivers sufficiently expressive architecture. The motivation of this paper is to analyze the very factors that impede deeper GCNs to perform promisingly, and develop method to address them.

% Deeper variants by simply stacking more layers, in principle can access more information, but perform worse~\citep{Li2018,Xu2018}.
% Even with the residue connections that are proved to be powerful in very deep Convolutional Neural Networks (CNNs), there is still no evidence to affirm that, GCNs with more than 2 layers perform as well as the 2-layer one on popular benchmarks (\emph{e.g.} Cora~\citep{sen2008collective}).
% So the following questions remain: ``\emph{what are the very factors that impede deeper GCNs to perform promisingly}'' and ``\emph{how can we eliminate those factors by developing techniques specific to graphs}'', both of which motivate the study of this paper.

We begin by investigating two factors: \emph{over-fitting} and \emph{over-smoothing}. Over-fitting comes from the case when we utilize an over-parametric model to fit a distribution with limited training data, where the model we learn fits the training data very well but generalizes poorly to the testing data. It does exist if we apply a deep GCN on small graphs (see 4-layer GCN on Cora in Figure~\ref{fig.compare}).
% Rather, the over-fitting issue is hard to be solved satisfactorily, even we have considered certain well-known tricks like weight penalizing and dropout~\citep{Hinton2012}. A more efficient method is in demand.
Over-smoothing, towards the other extreme, makes training a very deep GCN difficult.
As first introduced by~\citet{Li2018} and further explained in~\citet{Wu2019,Xu2018,Klicpera2019}, graph convolutions essentially push representations of adjacent nodes mixed with each other, such that, if extremely we go with an infinite number of layers, all nodes' representations will converge to a stationary point, making them unrelated to the input features and leading to vanishing gradients. We call this phenomenon as over-smoothing of node features.
To illustrate its influence, we have conducted an example experiment with 8-layer GCN in Figure~\ref{fig.compare}, in which the training of such a deep GCN is observed to converge poorly.

Both of the above two issues can be alleviated, using the proposed method, DropEdge. The term ``DropEdge'' refers to randomly dropping out certain rate of edges of the input graph for each training time.
% In its particular form, each edge is independently dropped with a fixed probability $p$, with $p$ being a hyper-parameter and determined by validation.
There are several benefits in applying DropEdge for the GCN training (see the experimental improvements by DropEdge in Figure~\ref{fig.compare}). First, DropEdge can be considered as a data augmentation technique. By DropEdge, we are actually generating different random deformed copies of the original graph; as such, we augment the randomness and the diversity of the input data, thus better capable of preventing over-fitting.
% It is analogous to performing random rotation, cropping, or flapping for robust CNN training in the context of images.
Second, DropEdge can also be treated as a message passing reducer. In GCNs, the message passing between adjacent nodes is conducted along edge paths. Removing certain edges is making node connections more sparse, and hence avoiding over-smoothing to some extent when GCN goes very deep. Indeed, as we will draw theoretically in this paper, DropEdge either reduces the convergence speed of over-smoothing or relieves the information loss caused by it.
% Figure~\ref{fig.compare} displays that, using DropEdge prevents the over-fitting issue for the 4-layer GCN and makes the training tractable for the 8-layer network.

% Finally, DropEdge is related but distinct to other concepts, such as the dropout skill~\citep{Hinton2012} that drops out the activation units of the network by random. Since activation dropout does not perform any data augmentation, its effect on alleviating over-fitting is not so strong as DropEdge, and it does not help prevent over-smoothing neither. We defer more discussions of DropEdge to other methods in \textsection~\ref{sec.dropedge}.

We are also aware that the dense connections employed by JKNet~\citep{Xu2018} are another kind of tools that can potentially prevent over-smoothing. In its formulation, JKNet densely connects each hidden layer to the top one, hence the feature mappings in lower layers that are hardly affected by over-smoothing are still maintained. Interestingly and promisingly, we find that the performance of JKNet can be promoted further if it is utilized along with our DropEdge.  Actually, our DropEdge---as a flexible and general technique---is able to enhance the performance of various popular backbone networks on several benchmarks, including GCN~\citep{Kipf2017}, ResGCN~\citep{li2019can}, JKNet~\citep{Xu2018}, and GraphSAGE~\citep{hamilton2017inductive}. We provide detailed evaluations in the experiments.

\begin{figure}[t]
\centering
\includegraphics [width=0.4\textwidth]{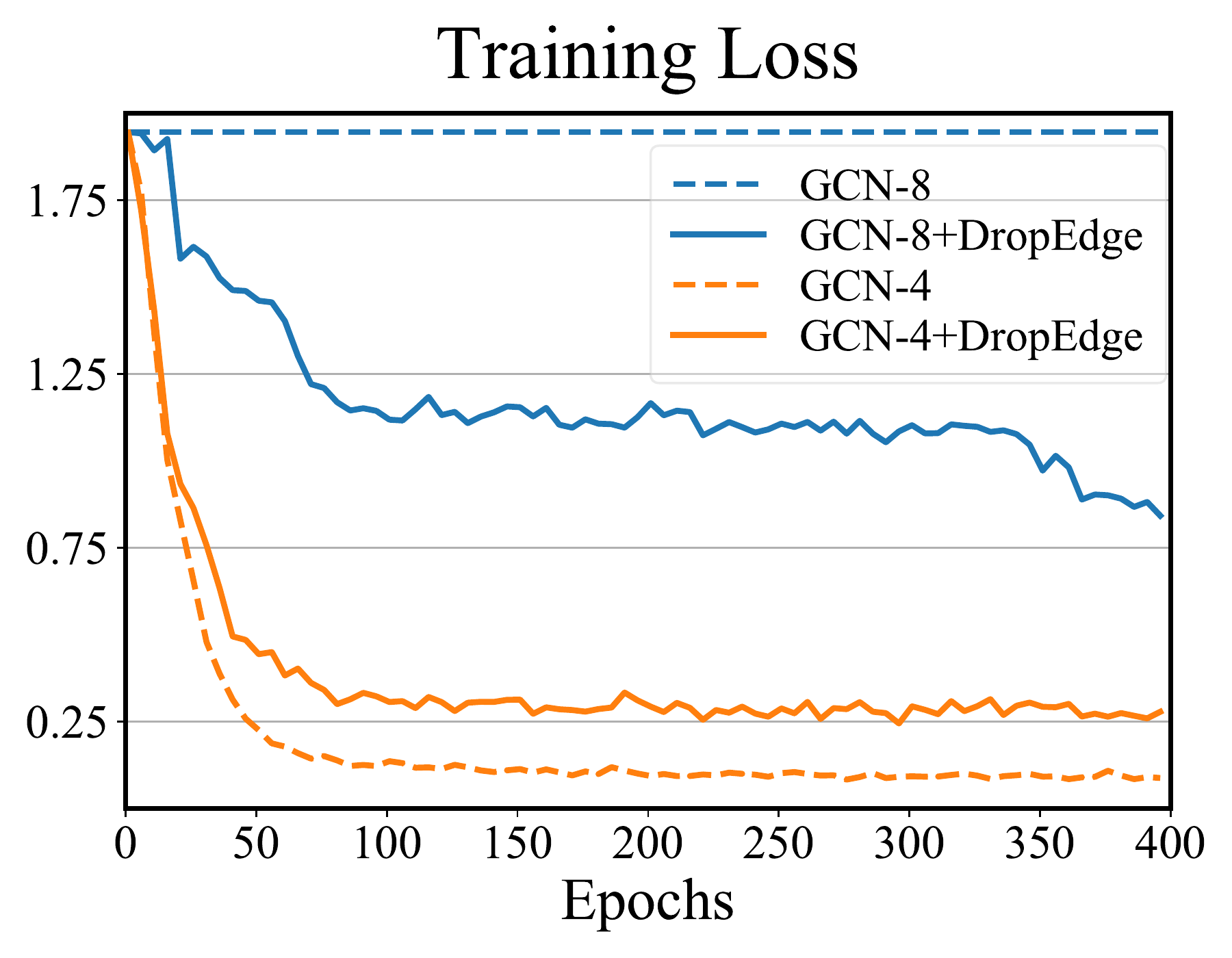}
\includegraphics [width=0.4\textwidth]{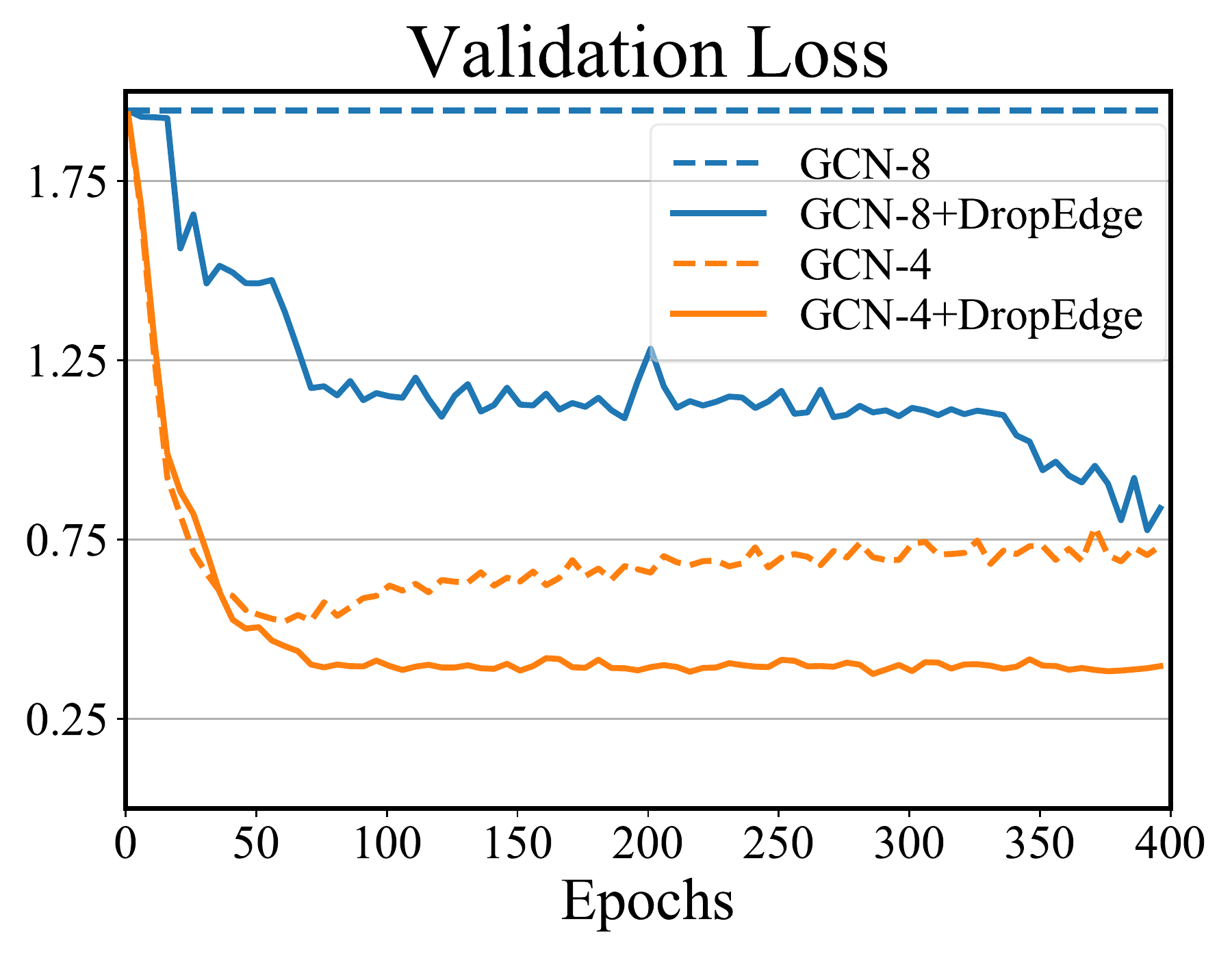}
\caption{Performance of Multi-layer GCNs on Cora.
We implement 4-layer GCN w and w/o DropEdge (in orange), 8-layer GCN w and w/o DropEdge (in blue)\protect\footnotemark. GCN-4 gets stuck in the over-fitting issue attaining low training error but high validation error; the training of GCN-8 fails to converge satisfactorily due to over-smoothing. By applying DropEdge, both GCN-4 and GCN-8 work well for both training and validation.}
\label{fig.compare}
\end{figure}\footnotetext{To check the efficacy of DropEdge more clearly, here we have removed bias in all GCN layers, while for the experiments in~\textsection~\ref{sec:exps}, the bias are kept.}

\section{Related Work}
% As the increasing demand of analyzing the data generated from non-Euclidean domain, researchers pay more attentions to
\paragraph{GCNs}
Inspired by the huge success of CNNs in computer vision, a large number of methods come redefining the notion of convolution on graphs under the umbrella of GCNs. The first prominent research on GCNs is presented in \citet{bruna2013spectral}, which develops graph convolution based on spectral graph theory. Later, \citet{Kipf2017,defferrard2016convolutional,henaff2015deep,Li2018a,Levie2017} apply improvements, extensions, and approximations on spectral-based GCNs. To address the scalability issue of spectral-based GCNs on large graphs, spatial-based GCNs have been rapidly developed~\citep{hamilton2017inductive,Monti2017,niepert2016learning,Gao2018}. These methods directly perform convolution in the graph domain by aggregating the information from neighbor nodes. Recently, several sampling-based methods have been proposed for fast graph representation learning, including the node-wise sampling methods~\citep{hamilton2017inductive}, the layer-wise approach~\citep{chen2018fastgcn} and its layer-dependent variant~\citep{Huang2018}. Specifically, GAT~\citep{DBLP:journals/corr/abs-1710-10903} has discussed applying dropout on edge attentions. While it actually is a post-conducted version of DropEdge before attention computation, the relation to over-smoothing is never explored in~\citet{DBLP:journals/corr/abs-1710-10903}. In our paper, however, we have formally presented the formulation of DropEdge and provided rigorous theoretical justification of its benefit in alleviating over-smoothing. We also carried out extensive experiments by imposing DropEdge on several popular backbones. One additional point is that we further conduct adjacency normalization after dropping edges, which, even simple, is able to make it much easier to converge during training and reduce gradient vanish as the number of layers grows.    
%The advancements of GCNs also push forward the applications on other domains, such as molecular property analysis~\citep{xx} and object modeling in videos~\citep{xx}.

\paragraph{Deep GCNs}
Despite the fruitful progress, most previous works only focus on shallow GCNs while the deeper extension is seldom discussed. The attempt for building deep GCNs is dated back to the  GCN paper~\citep{Kipf2017}, where the residual mechanism is applied; unexpectedly, as shown in their experiments, residual GCNs still perform worse when the depth is 3 and beyond.
The authors in~\citet{Li2018} first point out the main difficulty in constructing deep networks lying in over-smoothing, but unfortunately, they never propose any method to address it. The follow-up study~\citep{Klicpera2019} solves over-smoothing by using personalized PageRank that additionally involves the rooted node into the message passing loop; however, the accuracy is still observed to decrease when the depth increases from 2.
JKNet~\citep{Xu2018} employs dense connections for multi-hop message passing which is compatible with DropEdge for formulating deep GCNs. \citet{oono2019asymptotic} theoretically prove that the node features of deep GCNs will converge to a subspace and incur information loss. It generalizes the conclusion in~\citet{Li2018} by further considering the ReLu function and convolution filters. Our interpretations on why DropEdge can impede over-smoothing is based on the concepts proposed by~\citet{oono2019asymptotic}. A recent method ~\citep{li2019can} has incorporated residual layers, dense connections and dilated convolutions into GCNs to facilitate the development of deep architectures. Nevertheless, this model is targeted on graph-level classification (\emph{i.e.} point cloud segmentation), where the data points are graphs and naturally disconnected from each other. In our task for node classification, the samples are nodes and they all couple with each other, thus the over-smoothing issue is more demanded to be addressed. By leveraging DropEdge, we are able to relieve over-smoothing, and derive more enhanced deep GCNs on node classification.

\section{Notations and Preliminaries}

\paragraph{Notations.}

Let  $\gG=(\sV, \mathcal{E})$ represent the input graph of size $N$ with nodes $v_i\in\sV$ and edges $(v_i, v_j)\in\mathcal{E}$. The node features are denoted as $\mX=\{\vx_1,\cdots,\vx_N\}\in\R^{N\times C}$, and the adjacency matrix is defined as $\mA\in\R^{N\times N}$ which associates each edge $(v_i, v_j)$ with its element $A_{ij}$. The node degrees are given by $\vd=\{d_1,\cdots,d_N\}$ where $d_i$ computes the sum of edge weights connected to node $i$. We define $\mD$ as the degree matrix whose diagonal elements are obtained from $\vd$.

%Eq.~\eqref{Eq:gat} can also be extended to the multi-head attention mechanism. Here, for similarity, we only consider the single-head version.
%
%In particular, $a_{\theta}(v_i, v_j)$ is defined to be zero if $u_j$ is not the neighborhood of $v_i$, and summation of the attentions over all neighborhoods is equal to 1, \ie, $\sum_{j=1}^N a_{\theta}(v_i, v_j)=1$. An example of the attention computation is defined in [], where the attention takes as input the hidden features.
%
%A more previous model named GCN has been developed by [] for graph learning. GCNs formulate the forward computation by replacing the attention value in Eq.~\eqref{Eq:gat} with a re-normalization of the adjacency matrix. Indeed, the GCN model will become a particular GAT eventually, if we consider the re-normalized adjacency matrix as a pre-defined attention function that is free of parameter.
%pThus for more generality purpose, this paper proposes fast learning method based on GATs p(\ie Eq.~\eqref{Eq:gat}).p

\paragraph{GCN}
is originally developed by~\citet{Kipf2017}. The feed forward propagation in GCN is recursively conducted as
\vskip -0.25in
\begin{eqnarray}
\label{Eq:gcn}
\mH^{(l+1)} &=& \sigma\left(\hat{\mA}\mH^{(l)}\mW^{(l)}\right),
\end{eqnarray}
\vskip -0.1in
where $\mH^{(l+1)}=\{\vh_1^{(l+1)},\cdots,\vh_N^{(l+1)}\}$ are the hidden vectors of the $l$-th layer with $\vh_i^{(l)}$ as the hidden feature for node $i$; $\hat{\mA}=\hat{\mD}^{-1/2}(\mA+\mI)\hat{\mD}^{-1/2}$ is the re-normalization of the adjacency matrix, and $\hat{\mD}$ is the corresponding degree matrix of $\mA+\mI$; $\sigma(\cdot)$ is a nonlinear function, \emph{i.e.} the ReLu function; and $\mW^{(l)}\in\mathbb{R}^{C_l \times C_{l-1}}$ is the filter matrix in the $l$-th layer with $C_l$ refers to the size of $l$-th hidden layer. We denote one-layer GCN computed by~\Eqref{Eq:gcn} as Graph Convolutional Layer (\textbf{GCL}) in what follows.

\section{Our Method: DropEdge}

This section first introduces the methodology of the DropEdge technique as well as its layer-wise variant where the adjacency matrix for each GCN layer is perturbed individually. We also explain how the proposed DropEdge can prevent over-fitting and over-smoothing in generic GCNs. Particularly for over-smoothing, we provide its mathematical definition and theoretical derivations on showing the benefits of DropEdge.

\subsection{Methodology}
\label{sec:methodology}
At each training epoch, the DropEdge technique drops out a certain rate of edges of the input graph by random. Formally, it randomly enforces $Vp$ non-zero elements of the adjacency matrix $\mathbf{A}$ to be zeros, where $V$ is the total number of edges and $p$ is the dropping rate. If we denote the resulting adjacency matrix as $\mA_{\text{drop}}$, then its relation with $\mA$ becomes
\begin{eqnarray}
\label{Eq:DropEdge}
\mA_{\text{drop}} &=& \mA-\mA',
\end{eqnarray}
where $\mA'$ is a sparse matrix expanded by a random subset of size $Vp$ from original edges $\mathcal{E}$. Following the idea of~\cite{Kipf2017}, we also perform the re-normalization trick  on $\mA_{\text{drop}}$, leading to $\hat{\mA}_{\text{drop}}$. We replace $\hat{\mA}$ with $\hat{\mA}_{\text{drop}}$ in \Eqref{Eq:gcn} for propagation and training. When validation and testing, DropEdge is not utilized.

\paragraph{Preventing over-fitting.}
DropEdge produces varying perturbations of the graph connections. As a result, it generates different random deformations of the input data and can be regarded as a data augmentation skill for graphs. To explain why this is valid, we provide an intuitive understanding here.
The key in GCNs is to aggregate neighbors' information for each node, which can be understood as a weighted sum of the neighbor features (the weights are associated with the edges). From the perspective of neighbor aggregation, DropEdge enables a random subset aggregation instead of the full aggregation during GNN training. Statistically, DropEdge only changes the expectation of the neighbor aggregation up to a multiplier $p$, if we drop edges with probability $p$. This multiplier will be actually removed after weights normalization, which is often the case in practice. Therefore, DropEdge does not change the expectation of neighbor aggregation and is an unbiased data augmentation technique for GNN training, similar to typical image augmentation skills (\emph{e.g.} rotation, cropping and flapping) that are capable of hindering over-fitting in training CNNs. We will provide experimental validations in \textsection~\ref{sec.cmpdropedge}.

\paragraph{Layer-Wise DropEdge.}
The above formulation of DropEdge is one-shot with all layers sharing the same perturbed adjacency matrix. Indeed, we can perform DropEdge for each individual layer. Specifically, we obtain $\hat{\mA}_{\text{drop}}^{(l)}$ by independently computing~\Eqref{Eq:DropEdge} for each $l$-th layer. Different layer could have different adjacency matrix $\hat{\mA}_{\text{drop}}^{(l)}$. Such layer-wise version brings in more randomness and deformations of the original data, and we will experimentally compare its performance with the original DropEdge in \textsection~\ref{sec:exp-dropedge}.

Over-smoothing is another obstacle of training deep GCNs, and we will detail how DropEdge can address it to some extent in the next section. For simplicity, the following derivations assume all GCLs share the same perturbed adjacency matrix, and we will leave the discussion on layer-wise DropEdge for future exploration.

\subsection{Towards preventing over-smoothing}
\label{sec:over-smoothing}
By its original definition in~\citet{Li2018}, the over-smoothing phenomenon implies that the node features will converge to a fixed point as the network depth increases. This unwanted convergence restricts the output of deep GCNs to be only relevant to the graph topology but independent to the input node features, which as a matter of course incurs detriment of the expressive power of GCNs. \citet{oono2019asymptotic} has generalized the idea in~\citet{Li2018} by taking both the non-linearity (\emph{i.e.} the ReLu function) and the convolution filters into account; they explain over-smoothing as convergence to a subspace rather than convergence to a fixed point. This paper will use the concept of subspace by~\citet{oono2019asymptotic} for more generality.

We first provide several relevant definitions that facilitate our later presentations.
\begin{definition}[subspace]
Let $\gM\coloneqq\{\mE\mC|\mC\in\R^{M\times C}\}$ be an $M$-dimensional subspace in $\R^{N\times C}$, where $\mE\in\R^{N\times M}$ is orthogonal, \emph{i.e.} $\mE^{\mathrm{T}}\mE=\mI_M$, and $M\leq N$.
\end{definition}
\begin{definition}[$\epsilon$-smoothing]
We call the $\epsilon$-smoothing of node features happens for a GCN, if all its hidden vectors $\mH^{(l)}$ beyond a certain layer $L$ have a distance no larger than $\epsilon$ ($\epsilon>0$) with respect to a subspace $\gM$ that is independent to the input features, namely,
\begin{equation}
\label{Eq:mix}
d_{\gM}(\mH^{(l)}) < \epsilon, \forall l\geq L,
\end{equation}
where $d_{\gM}(\cdot)$ computes the distance between the input matrix and the subspace $\gM$.\footnote{The definition of $d_{\gM}(\cdot)$ is provided in the supplementary material.}
\end{definition}
\begin{definition}[the $\epsilon$-smoothing layer]
Given the subspace $\gM$ and $\epsilon$, we call the minimal value of the layers that satisfy~\Eqref{Eq:mix} as the $\epsilon$-smoothing layer, that is, $l^{\ast}(\gM, \epsilon)\coloneqq\min_{l}\{d_{\gM}(\mH^{(l)}) < \epsilon\}$.
\end{definition}

Since conducting analysis exactly based on the $\epsilon$-smoothing layer is difficult, we instead define the relaxed $\epsilon$-smoothing layer which is proved to be an upper bound of $l^{\ast}$.
\begin{definition}[the relaxed $\epsilon$-smoothing layer] 
Given the subspace $\gM$ and $\epsilon$, we call $\hat{l}(\gM,\epsilon)=\lceil\frac{\log(\epsilon/d_{\gM}(\mX))}{\log s\lambda}\rceil$ as the relaxed smoothing layer, where, $\lceil \cdot \rceil$ computes the ceil of the input, $s$ is the supremum of the filters' singular values over all layers, and $\lambda$ is the second largest eigenvalue of $\hat{\mA}$. Besides, we have $\hat{l}\geq l^{\ast}$\footnote{ All detailed definitions and proofs are provided in the appendix.}.
\end{definition}

According to the conclusions by the authors in~\citet{oono2019asymptotic}, a sufficiently deep GCN will certainly suffer from the $\epsilon$-smoothing issue for any small value of $\epsilon$ under some mild conditions (the details are included in the supplementary material). Note that they only prove the existence of $\epsilon$-smoothing in deep GCN without developing any method to address it.

Here, we will demonstrate that adopting DropEdge alleviates the $\epsilon$-smoothing issue in two aspects: \textbf{1.} By reducing node connections, DropEdge is proved to slow down the convergence of over-smoothing; in other words, the value of the relaxed $\epsilon$-smoothing layer will only increase if using DropEdge. \textbf{2.} The gap between the dimensions of the original space and the converging subspace, \emph{i.e.} $N-M$ measures the amount of information loss; larger gap means more severe information loss. As shown by our derivations, DropEdge is able to increase the dimension of the converging subspace, thus capable of reducing information loss.

We summarize our conclusions as follows.
\begin{theorem}
\label{The:smoothing}
We denote the original graph as $\gG$ and the one after dropping certain edges out as $\gG'$. Given a small value of $\epsilon$, we assume $\gG$ and $\gG'$ will encounter the $\epsilon$-smoothing issue with regard to subspaces $\gM$ and $\gM'$, respectively. Then, either of the following inequalities holds after sufficient edges removed.
\begin{itemize}
    \item The relaxed smoothing layer only increases: $\hat{l}(\gM, \epsilon)\leq \hat{l}(\gM', \epsilon)$;
    \item The information loss is decreased: $N-\text{dim}(\gM) > N-\text{dim}(\gM')$.
\end{itemize}
\end{theorem}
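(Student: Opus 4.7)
The plan is to anchor the argument on the spectral characterization of the limiting subspace from \citet{oono2019asymptotic}: $\gM$ is the invariant subspace of $\hat{\mA}$ spanned by eigenvectors with the largest eigenvalue (equal to $1$ after the self-loop renormalization), its dimension $\dim(\gM)$ equals the number of connected components of $\gG$, and $\lambda$ is the contraction rate of $\hat{\mA}$ on the orthogonal complement of $\gM$. Both quantities the theorem compares—the relaxed smoothing layer and the information-loss gap $N-\dim(\gM)$—are therefore controlled entirely by the spectrum of $\hat{\mA}$, so the whole proof reduces to tracking how this spectrum changes under edge deletion.

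With this in hand, I would split by whether dropping edges alters the component structure of $\gG$. If enough edges are removed to disconnect some component further, the multiplicity of the eigenvalue $1$ of $\hat{\mA}'$ strictly grows, giving $\dim(\gM') > \dim(\gM)$ and hence $N-\dim(\gM) > N-\dim(\gM')$, which is the second bullet. Otherwise $\gG'$ has the same component structure as $\gG$, so $\dim(\gM) = \dim(\gM')$, and all the action sits in the shift of the second eigenvalue. I would argue $\lambda' \geq \lambda$; since we are in the convergence regime $s\lambda < 1$, the quantity $\log(s\lambda')$ then sits closer to zero than $\log(s\lambda)$ while both remain negative. Combined with $\log(\epsilon/d_{\gM}(\mX)) < 0$, this pushes the ratio defining $\hat{l}$ upward, and the ceiling yields $\hat{l}(\gM,\epsilon) \leq \hat{l}(\gM',\epsilon)$, the first bullet. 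The \emph{sufficient edges removed} clause is exactly what guarantees that, in whichever regime we fall, the relevant inequality is non-trivially realized.

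The genuinely delicate step, and the main obstacle, is the monotonicity $\lambda' \geq \lambda$ in the equal-components case. Cauchy interlacing does not apply directly, because deleting an edge perturbs both the off-diagonal entries of $\mA+\mI$ and the normalizing degrees in $\hat{\mD}$. My plan is to work with the symmetric normalized Laplacian $\hat{\mL}=\mI-\hat{\mA}$ through its Rayleigh-quotient characterization, writing its second smallest eigenvalue as the constrained minimum of $\sum_{(i,j)\in\mathcal{E}}(x_i-x_j)^2 / \sum_i \hat{d}_i x_i^2$ and comparing the infima over $\mathcal{E}$ and $\mathcal{E}\setminus\mathcal{E}'$, carefully tracking how the $\hat{d}_i$ shrink as edges are deleted. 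A subsidiary concern is that $d_{\gM}(\mX)$ itself depends on $\gM$; here I would use that in the equal-components case the two subspaces coincide up to the reweighting induced by the degree change, so the two distances are comparable and the first bullet still goes through once sufficiently many edges have been removed.
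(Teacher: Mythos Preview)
Your overall skeleton---split on whether edge removal disconnects a component, and in the connected case argue that the spectral gap shrinks (equivalently $\lambda$ rises) so that $\hat{l}$ goes up---matches the paper's. The divergence is entirely in how you propose to control $\lambda$ in the non-disconnecting case.

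You plan a direct Rayleigh-quotient comparison for the normalized Laplacian $\hat{\mL}=\mI-\hat{\mA}$. This is exactly where a real obstacle sits, and you flag it yourself: deleting an edge changes both the numerator \emph{and} the degree weights in the denominator of the quotient, so the infimum can move either way. In fact, for the \emph{normalized} Laplacian the algebraic connectivity is \emph{not} monotone under edge deletion in general, so the inequality $\lambda'\geq\lambda$ you want does not hold edge-by-edge. Without an additional idea, the Rayleigh-quotient route stalls here.

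The paper sidesteps this by going through random-walk/electrical-network theory rather than a direct variational argument. It invokes the classical bound (Lov\'asz, 1993)
\[
\lambda \;\geq\; 1-\frac{1}{R_{st}}\Bigl(\frac{1}{d_s}+\frac{1}{d_t}\Bigr),
\]
where $R_{st}$ is the effective resistance between a fixed pair $s,t$. Rayleigh's monotonicity law gives that $R_{st}$ can only increase when an edge is removed; if the removed edge is not incident to $s$ or $t$, the degrees $d_s,d_t$ are unchanged, so the \emph{lower bound} on $\lambda$ is monotone. Pushing this until some $R_{st}\to\infty$ forces either $\lambda\to 1$ (first bullet, via the formula for $\hat{l}$) or an actual disconnection (second bullet). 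The point is that working with a resistance-based lower bound decouples the edge deletion from the degree renormalization that wrecks your Rayleigh-quotient plan. If you want to repair your argument, this is the ingredient to import; your current plan does not yet contain a substitute for it.

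One more remark: you are right to worry about $d_{\gM}(\mX)$ versus $d_{\gM'}(\mX)$, and the paper simply does not address this point---it tacitly treats the initial distance as fixed when comparing $\hat{l}$ on $\gG$ and $\gG'$. So your caution there is warranted, but it is orthogonal to the main gap above.
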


The proof of Theorem~\ref{The:smoothing} is based on the derivations in~\citet{oono2019asymptotic} as well as the concept of \emph{mixing time} that has been studied in the random walk theory~\citep{Lovasz1993}. We provide the full details in the supplementary material. Theorem~\ref{The:smoothing} tells that DropEdge either reduces the convergence speed of over-smoothing or relieves the information loss caused by it. In this way, DropEdge enables us to train deep GCNs more effectively.

\subsection{discussions}\label{sec.discussions}
This sections contrasts the difference between DropEdge and other related concepts including Dropout, DropNode, and Graph Sparsification.

\paragraph{DropEdge vs. Dropout}
The Dropout trick~\citep{Hinton2012} is trying to perturb the feature matrix by randomly setting feature dimensions to be zeros, which may reduce the effect of over-fitting but is of no help to preventing over-smoothing since it does not make any change of the adjacency matrix. As a reference, DropEdge can be regarded as a generation of Dropout from dropping feature dimensions to dropping edges, which mitigates both over-fitting and over-smoothing. In fact, the impacts of Dropout and DropEdge are complementary to each other, and their compatibility will be shown in the experiments.

\paragraph{DropEdge vs. DropNode}
Another related vein belongs to the kind of node sampling based methods, including GraphSAGE~\citep{hamilton2017inductive}, FastGCN~\citep{chen2018fastgcn}, and AS-GCN~\citep{Huang2018}. We name this category of approaches as DropNode. For its original motivation, DropNode samples sub-graphs for mini-batch training, and it can also be treated as a specific form of dropping edges since the edges connected to the dropping nodes are also removed. However, the effect of DropNode on dropping edges is node-oriented and indirect. By contrast, DropEdge is edge-oriented, and it is possible to preserve all node features for the training (if they can be fitted into the memory at once), exhibiting more flexibility. Further, to maintain desired performance, the sampling strategies in current DropNode methods are usually inefficient, for example, GraphSAGE suffering from the exponentially-growing layer size, and AS-GCN requiring the sampling to be conducted recursively layer by layer. Our DropEdge, however, neither increases the layer size as the depth grows nor demands the recursive progress because the sampling of all edges are parallel.

\paragraph{DropEdge vs. Graph-Sparsification}
Graph-Sparsification~\citep{eppstein1997sparsification} is an old research topic in the graph domain. Its optimization goal is removing unnecessary edges for graph compressing while keeping almost all information of the input graph. This is clearly district to the purpose of DropEdge where no optimization objective is needed. Specifically, DropEdge will remove the edges of the input graph by random at each training time, whereas Graph-Sparsification resorts to a tedious optimization method  to determine which edges to be deleted, and once those edges are discarded the output graph keeps unchanged.

\section{Experiments}
\label{sec:exps}

\paragraph{Datasets} Joining the previous works' practice, we focus on four benchmark datasets varying in graph size and feature type: (1) classifying the research topic of papers in three citation datasets: Cora, Citeseer and Pubmed~\citep{sen2008collective}; (2) predicting which community different posts belong to in the Reddit social network~\citep{hamilton2017inductive}. Note that the tasks in Cora, Citeseer and Pubmed are transductive underlying all node features are accessible during training, while the task in Reddit is inductive meaning the testing nodes are unseen for training. We apply the full-supervised training fashion used in~\citet{Huang2018} and \citet{chen2018fastgcn} on all datasets in our experiments. The statics of all datasets are listed in the supplemental materials.

\begin{table}[htbp]
  \centering
  \caption{Testing accuracy (\%) comparisons on different backbones w and w/o DropEdge.}
    \vspace{-2ex}
    \small
    \begin{tabular}{rl|rr|rr|rr}
    \hline
    \multicolumn{1}{c}{\multirow{2}[4]{*}{Dataset}} & \multicolumn{1}{c|}{\multirow{2}[4]{*}{Backbone}} & \multicolumn{2}{c|}{2 layers} & \multicolumn{2}{c|}{8 layers} & \multicolumn{2}{c}{32 layers} \\
\cline{3-8}          &       & \multicolumn{1}{c}{Orignal} & \multicolumn{1}{c|}{DropEdge} & \multicolumn{1}{c}{Orignal} & \multicolumn{1}{c|}{DropEdge} & \multicolumn{1}{c}{Orignal} & \multicolumn{1}{c}{DropEdge} \\
    \hline
    \multicolumn{1}{c}{\multirow{5}[2]{*}{Cora}} & GCN   & 86.10 & \textbf{86.50} & 78.70 & \textbf{85.80} & 71.60 & \textbf{74.60} \\
          & ResGCN & -     & -     & 85.40 & \textbf{86.90} & 85.10 & \textbf{86.80} \\
          & JKNet & -     & -     & 86.70 & \textbf{87.80} & 87.10 & \textbf{87.60} \\
          & IncepGCN & -     & -     & 86.70 & \textbf{88.20} & 87.40 & \textbf{87.70} \\
          & GraphSAGE & 87.80 & \textbf{88.10} & 84.30 & \textbf{87.10} & 31.90 & \textbf{32.20} \\
    \hline
    \multicolumn{1}{c}{\multirow{5}[2]{*}{Citeseer}} & GCN   & 75.90 & \textbf{78.70} & 74.60 & \textbf{77.20} & 59.20 & \textbf{61.40} \\
          & ResGCN & -     & -     & 77.80 & \textbf{78.80} & 74.40 & \textbf{77.90} \\
          & JKNet & -     & -     & 79.20 & \textbf{80.20} & 71.70 & \textbf{80.00} \\
          & IncepGCN & -     & -     & 79.60 & \textbf{80.50} & 72.60 & \textbf{80.30} \\
          & GraphSAGE & 78.40 & \textbf{80.00} & 74.10 & \textbf{77.10} & 37.00 & \textbf{53.60} \\
    \hline
    \multicolumn{1}{c}{\multirow{5}[2]{*}{Pubmed}} & GCN   & 90.20 & \textbf{91.20} & 90.10 & \textbf{90.90} & 84.60 & \textbf{86.20} \\
          & ResGCN & -     & -     & 89.60 & \textbf{90.50} & 90.20 & \textbf{91.10} \\
          & JKNet & -     & -     & 90.60 & \textbf{91.20} & 89.20 & \textbf{91.30} \\
          & IncepGCN & -     & -     & 90.20 & \textbf{91.50} & OOM   & \textbf{90.50} \\
          & GraphSAGE & 90.10 & \textbf{90.70} & 90.20 & \textbf{91.70} & 41.30 & \textbf{47.90} \\
    \hline
    \multicolumn{1}{c}{\multirow{5}[2]{*}{Reddit}} & GCN   & 96.11 & \textbf{96.13} & 96.17 & \textbf{96.48} & 45.55 & \textbf{50.51} \\
          & ResGCN & -     & -     & 96.37 & \textbf{96.46} & 93.93 & \textbf{94.27} \\
          & JKNet & -     & -     & 96.82 & \textbf{97.02} & OOM   & OOM \\
          & IncepGCN & -     & -     & 96.43 & \textbf{96.87} & OOM   & OOM \\
          & GraphSAGE & 96.22 & \textbf{96.28} & 96.38 & \textbf{96.42} & 96.43 & \textbf{96.47} \\
    \hline
    \end{tabular}%
  \label{tab:dropvsnodrop_small}%
\end{table}%

\subsection{Can DropEdge generally improve the performance of deep GCNs?}\label{sec.cmpdropedge}

In this section, we are interested in if applying DropEdge can promote the performance of current popular GCNs (especially their deep architectures) on node classification.

 %that is residual network (ResGCN)\citep{he2016deep}, inception network (IncepGCN)\citep{szegedy2016} and dense network (JKNet) \citep{Huang2017}\citep{xu2018representation}.

\paragraph{Implementations}
We consider five backbones: GCN~\citep{Kipf2017}, ResGCN~\citep{he2016deep,li2019can}, JKNet~\citep{Xu2018}, IncepGCN\footnote{The formulation is given in the appendix.} and GraphSAGE~\citep{hamilton2017inductive} with varying depth from 2 to 64.\footnote{For Reddit, the maximum depth is 32 considering the memory bottleneck.} Since different structure exhibits different training dynamics on different dataset, to enable more robust comparisons, we perform random  hyper-parameter search for each model, and report the case giving the best accuracy on validation set of each benchmark. The searching space of hyper-parameters and more details are provided in Table~\ref{tab:hyperparameterdescription} in the supplementary material. Regarding the same architecture w or w/o DropEdge, we apply the same set of hyper-parameters except the drop rate $p$ for fair evaluation.

\paragraph{Overall Results}
Table~\ref{tab:dropvsnodrop_small} summaries the results on all datasets. We only report the performance of the model with 2/8/32 layers here due to the space limit, and provide the accuracy under other different depths in the supplementary material.
It's observed that DropEdge consistently improves the testing accuracy for all cases. The improvement is more clearly depicted in Figure~\ref{fig:dropimprovement}, where we have computed the average absolute improvement over all backbones by DropEdge on each dataset under different numbers of layers. On Citeseer, for example,
DropEdge yields further improvement for deeper architecture; it gains 0.9\% average improvement for the model with 2 layers while achieving a remarkable 13.5\% increase for the model with 64 layers. In addition, the validation losses of all 4-layer models on Cora are shown in Figure~\ref{fig:dropvallosscmp}. The curves along the training epoch are dramatically pulled down after applying DropEdge, which also explains the effect of DropEdge on alleviating over-fitting.
Another valuable observation in Table~\ref{tab:dropvsnodrop_small} is that the 32-layer IncepGCN without DropEdge incurs the Out-Of-Memory (OOM) issue while the model with DropEdge survives, showing the advantage of DropEdge to save memory consuming by making the adjacency matrix sparse.

% \paragraph{The contribution of DropEdge} To justify the importance of DropEdge, we contrast the validation loss of four backbones with and without DropEdge under the same hyper-parameter settings including dropping rate. Figure~\ref{fig:dropvallosscmp} shows the the validation loss of 4-layer model on Cora. As depicted in Figure~\ref{fig:dropvallosscmp}, the backbones considering DropEdge can alleviate the overfitting and outperform those without DropEdge significantly. To elaborate the behaviors of DropEdge on deep models, we show the average absolute improvement for all backbones on all datasets with different layers in Figure~\ref{fig:dropimprovement}. As shown in Figure~\ref{fig:dropimprovement}, for the deeper models, DropEdge can deliver much better performance. For example, on Citeseer, DropEdge gains 0.9\% average improvement with 2 layers, while it gain remarkable 13.5\%  average improvement with 64 layers. It explains the necessity of conducting DropEdge in deep GCNs.

\paragraph{Comparison with SOTAs}
We select the best performance for each backbone with DropEdge, and contrast them with existing State of the Arts (SOTA), including GCN, FastGCN, AS-GCN and GraphSAGE in Table~\ref{tab:cmpwithdropnode}; for the SOTA methods, we reuse the results reported in~\citet{Huang2018}. We have these findings: (1) Clearly, our DropEdge obtains significant enhancement against SOTAs; particularly on Reddit, the best accuracy by our method is 97.02\%, and it is better than the previous best by AS-GCN (96.27\%), which is regarded as a remarkable boost considering the challenge on this benchmark.
(2) For most models with DropEdge, the best accuracy is obtained under the depth beyond 2, which again verifies the impact of DropEdge on formulating deep networks.
(3) As mentioned in \textsection~\ref{sec.discussions}, FastGCN, AS-GCN and GraphSAGE are considered as the DropNode extensions of GCNs. The DropEdge based approaches outperform the DropNode based variants as shown in Table~\ref{tab:cmpwithdropnode}, which somehow confirms the effectiveness of DropEdge. Actually, employing DropEdge upon the DropNode methods further delivers promising enhancement, which can be checked by revisiting the increase by DropEdge for GraphSAGE in Table~\ref{tab:dropvsnodrop_small}.

% We also conduct the comparison with current state-of-the-art drop node GCNs and DropEdge-equipped GCNs. Table~\ref{tab:cmpwithdropnode} show the accuracy results of drop node GCNs as well as ours. For DropEdge-equipped GCNs, the number in parentheses indicates the number of layers it contains; we only report the best results among the backbones by ranging the depth from 2 to 64. As shown in Table~\ref{tab:cmpwithdropnode}, the performances drop node GCNs except ASGCN are worse than the orginal GCN. This may due to the information loss from dropping node during training. However, the DropEdge-equipped GCNs outperform all other baselines on all datasets significantly. We also find that most of best results are obtained under deeper models. It confirms the superiority of DropEdge in deep GCNs.

\begin{figure}
\centering
\hspace{-5mm}
\begin{subfigure}[t]{.6\textwidth}
    \centering
    \includegraphics[width=0.98\textwidth]{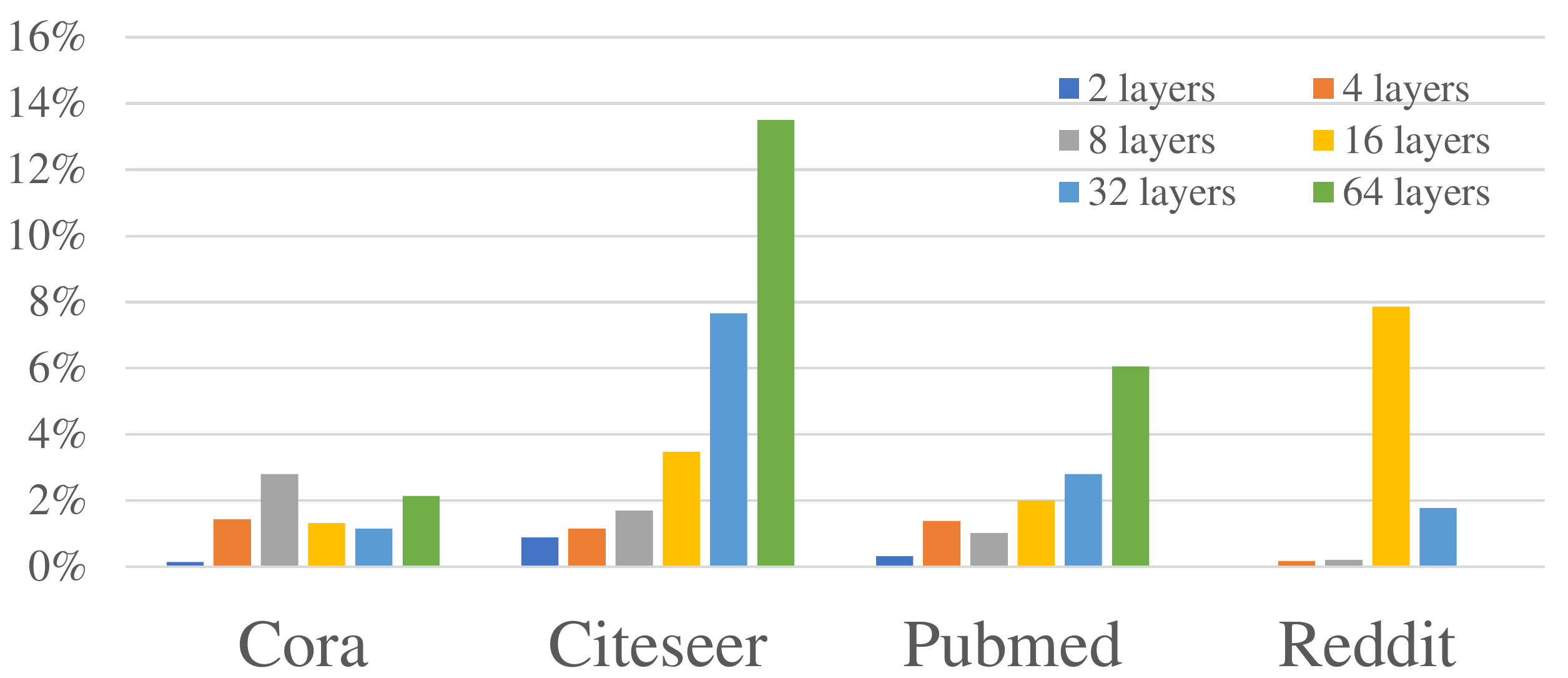}
    \caption{The average absolute improvement by DropEdge.}
    \label{fig:dropimprovement}
\end{subfigure}%
\hspace{5mm}
\begin{subfigure}[t]{.32\textwidth}
\includegraphics [width=0.98\textwidth]{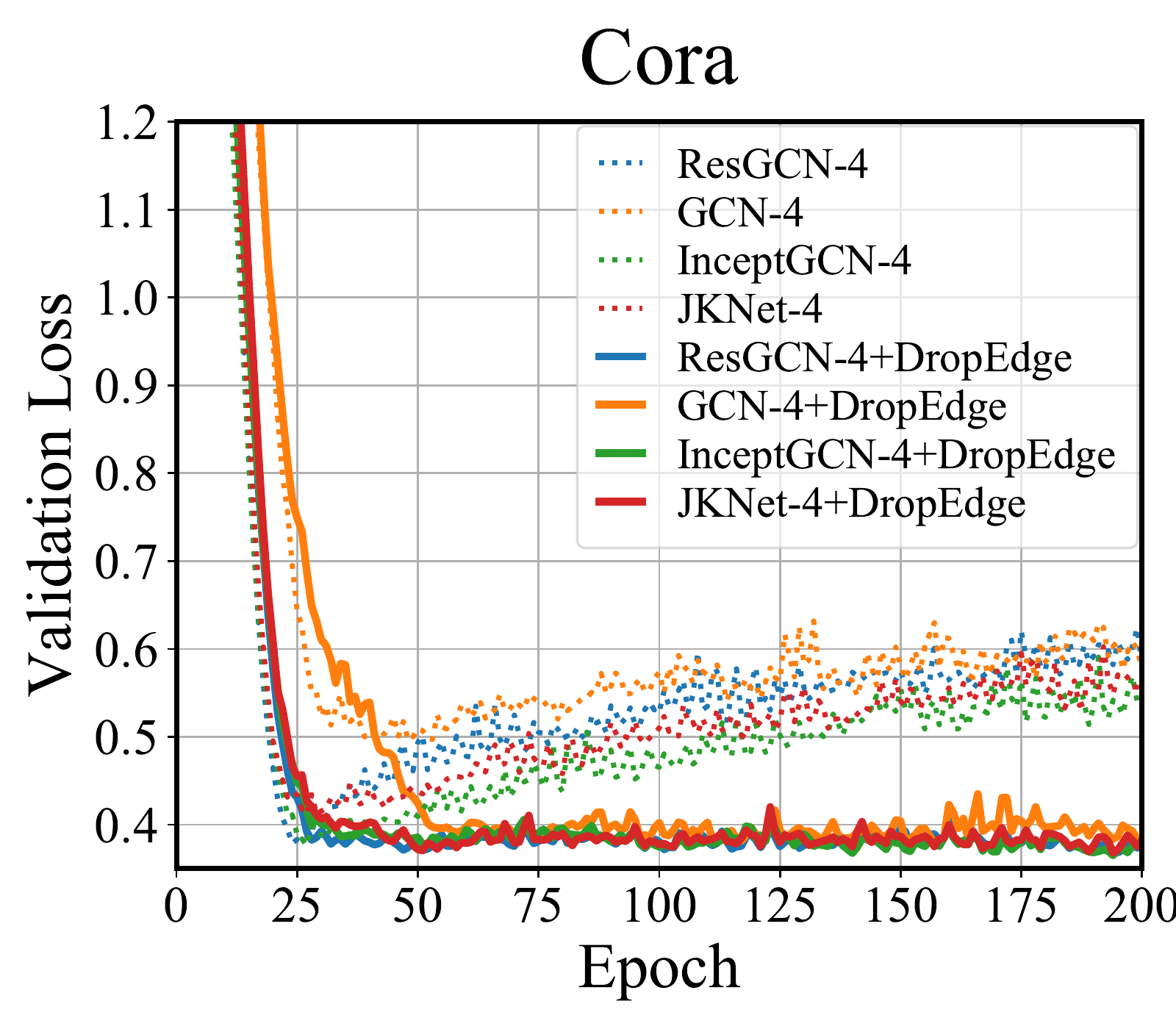}
\caption{The validation loss on different backbones w and w/o DropEdge.}
\label{fig:dropvallosscmp}
\end{subfigure}%
\vskip -0.15in
\caption{}
\vskip -0.15in
\label{fig.performance}
\end{figure}

% Table generated by Excel2LaTeX from sheet 'Drop vs no Drop simple'
\begin{table}[htbp]
  \centering
  \caption{Accuracy (\%) comparisons with SOTAs. The number in parenthesis denotes the network depth for the models with DropEdge.}
  \vspace{-2ex}
% Table generated by Excel2LaTeX from sheet 'Drop vs no Drop simple'
\begin{tabular}{lrrrr}
\hline
\multirow{2}[4]{*}{} & \multicolumn{3}{c}{Transductive } & \multicolumn{1}{l}{Inductive} \\
\cline{2-5}      & \multicolumn{1}{c}{Cora} & \multicolumn{1}{c}{Citeseer} & \multicolumn{1}{c}{Pubmed} & \multicolumn{1}{c}{Reddit} \\
\hline
GCN   & 86.64 & 79.34 & 90.22 & 95.68 \\
\hline
FastGCN & 85.00 & 77.60 & 88.00 & 93.70 \\
ASGCN & 87.44 & 79.66 & 90.60 & 96.27 \\
GraphSAGE & 82.20 & 71.40 & 87.10 & 94.32 \\
\hline
GCN+DropEdge & 87.60(4) & 79.20(4) & 91.30(4) & 96.71(4) \\
ResGCN+DropEdge & 87.00(4) & 79.40(16) & 91.10(32) & 96.48(16) \\
JKNet+DropEdge & 88.00(16) & 80.20(8) & 91.60(64) & \textbf{97.02(8)} \\
IncepGCN+DropEdge & \textbf{88.20(8)} & \textbf{80.50(8)} & 91.60(4) & 96.87(8) \\
GraphSAGE+DropEdge & 88.10(4) & 80.00(2) & \textbf{91.70(8)} & 96.54(4) \\
\hline
\end{tabular}%
  \label{tab:cmpwithdropnode}%
\end{table}%

\subsection{How does DropEdge help?}
\label{sec:exp-dropedge}
This section continues a more in-depth analysis on DropEdge and attempts to figure out why it works. Due to the space limit, we only provide the results on Cora, and defer the evaluations on other datasets to the supplementary material.

Note that this section mainly focuses on analyzing DropEdge and its variants,  without the concern with pushing state-of-the-art results. So, we do not perform delicate hyper-parameter selection.
We employ GCN as the backbone in this section. Here, GCN-$n$ denotes GCN of depth $n$. The hidden dimension, learning rate and weight decay are fixed to 256, 0.005 and 0.0005, receptively. The random seed is fixed. We train all models with 200 epochs. Unless otherwise mentioned, we do not utilize the ``withloop'' and ``withbn'' operation (see their definitions in Table~\ref{tab:hyper-desc} in the appendix).

\subsubsection{On preventing over-smoothing}

As discussed in \textsection~\ref{sec:over-smoothing}, the over-smoothing issue exists when the top-layer outputs of GCN converge to a subspace and become unrelated to the input features with the increase in depth. Since we are unable to derive the converging subspace explicitly, we measure the degree of over-smoothing by instead computing the difference between the output of the current layer and that of the previous one. We adopt the Euclidean distance for the difference computation. Lower distance means more serious over-smoothing. Experiments are conducted on GCN-8.

\begin{figure}
\centering
\includegraphics [width=0.32\textwidth]{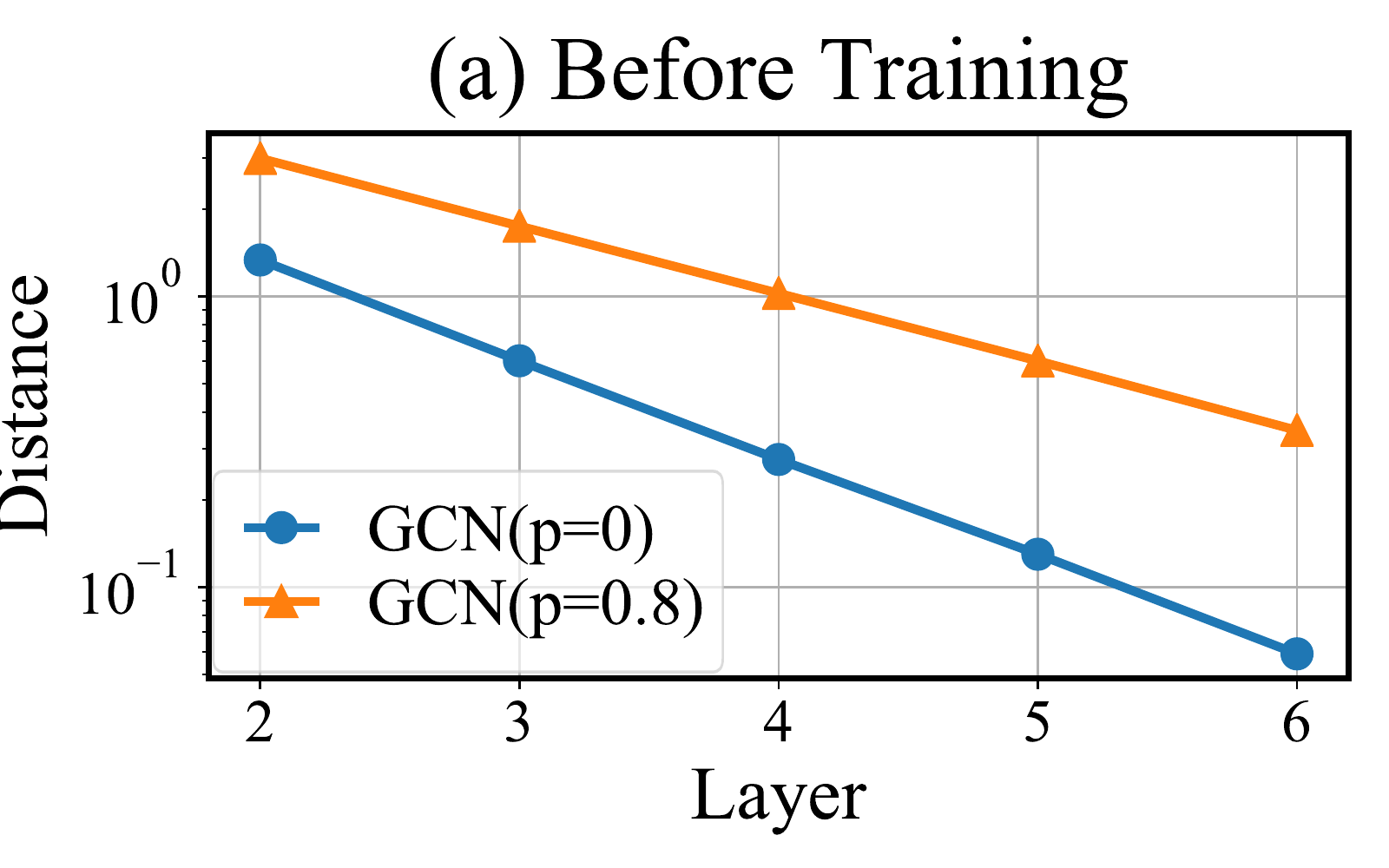}
\includegraphics [width=0.32\textwidth]{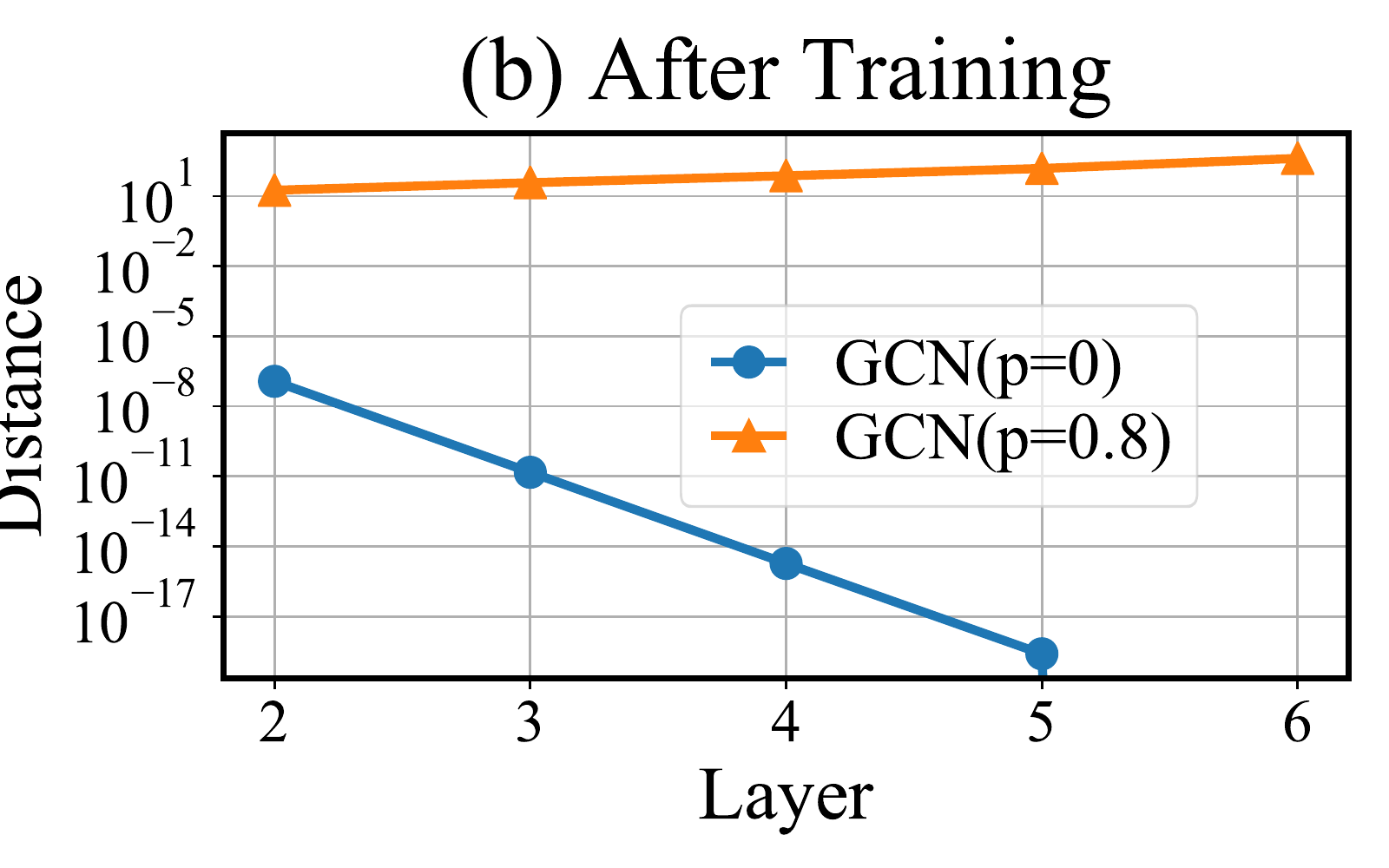}
\includegraphics [width=0.32\textwidth]{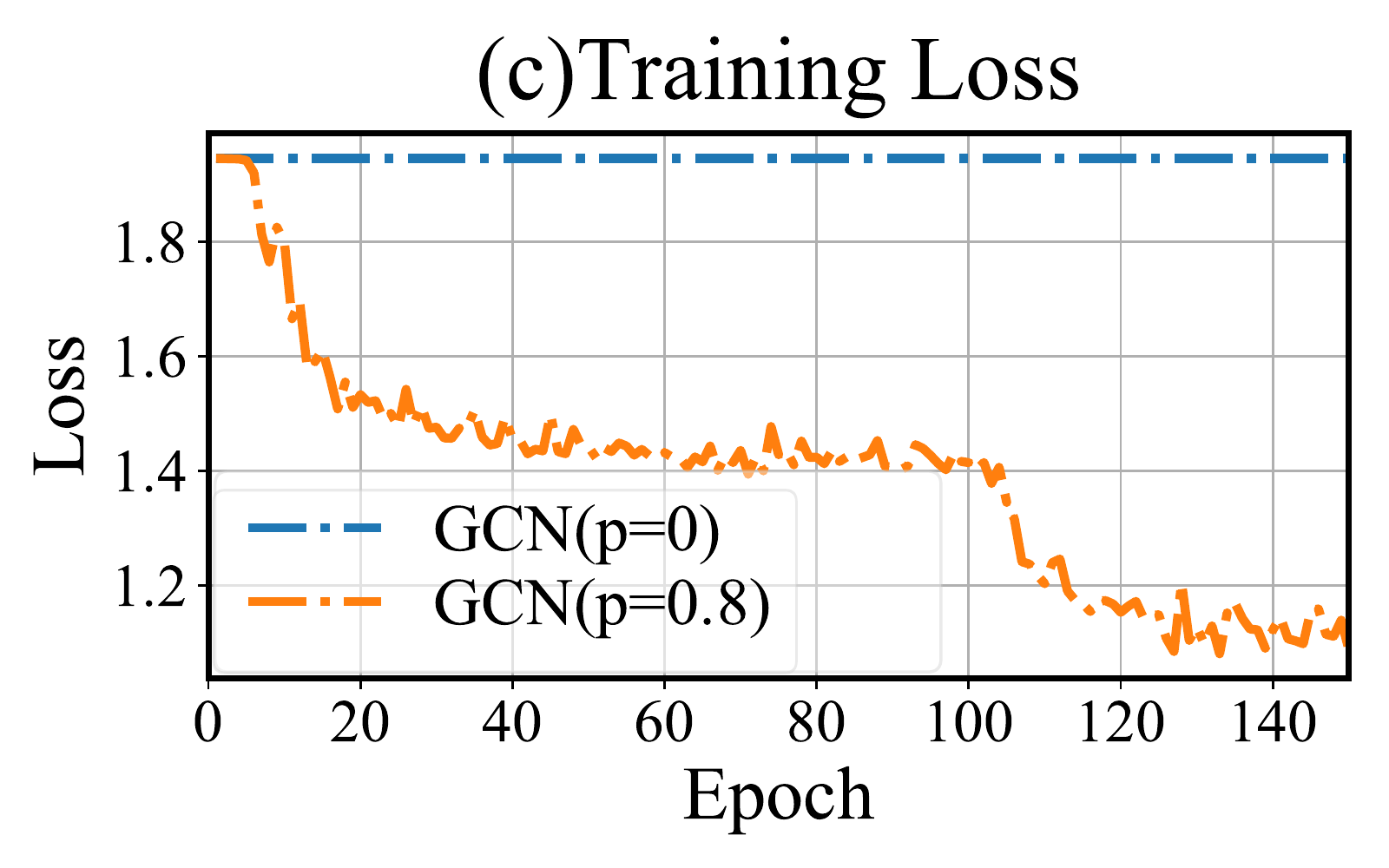}
\vskip -0.11in
\caption{Analysis on over-smoothing. Smaller distance means more serious over-smoothing.}
\vskip -0.11in
\label{fig.featuresmooth}
\end{figure}

Figure~\ref{fig.featuresmooth}~(a) shows the distances of different intermediate layers (from 2 to 6) under different edge dropping rates (0 and 0.8). Clearly, over-smoothing becomes more serious in GCN as the layer grows, which is consistent with our conjecture. Conversely, the model with DropEdge ($p=0.8$) reveals higher distance and slower convergent speed than that without DropEdge ($p=0$), implying the importance of DropEdge to alleviating over-smoothing.
We are also interested in how the over-smoothing will act after training. For this purpose, we display the results after 150-epoch training in Figure~\ref{fig.featuresmooth}~(b). For GCN without DropEdge, the difference between outputs of the 5-th and 6-th layers is equal to 0, indicating that the hidden features have converged to a certain stationary point. On the contrary, GCN with DropEdge performs promisingly, as the distance does not vanish to zero when the number of layers grows; it probably has successfully learned meaningful node representations after training, which could also be validated by the training loss in Figure~\ref{fig.featuresmooth}~(c).

\subsubsection{On Compatibility with Dropout}
\textsection~\ref{sec.discussions} has discussed the difference between DropEdge and Dropout. Hence, we conduct an ablation study on GCN-4, and the validation losses are demonstrated in Figure~\ref{fig.abstudy.drop}. It reads that while both Dropout and DropEdge are able to facilitate the training of GCN, the improvement by DropEdge is more significant, and if we adopt them concurrently, the loss is decreased further, indicating the compatibility of DropEdge with Dropout.

\subsubsection{On layer-wise DropEdge}
\textsection~\ref{sec:methodology} has descried the Layer-Wise (LW) extension of DropEdge. Here, we provide the experimental evaluation on assessing its effect. As observed from Figure~\ref{fig.abstudy.layer-independent}, the LW DropEdge achieves lower training loss than the original version, whereas the validation value between two models is comparable. It implies that LW DropEdge can facilitate the training further than original DropEdge. However, we prefer to use DropEdge other than the LW variant so as to not only avoid the risk of over-fitting but also reduces computational complexity since LW DropEdge demands to sample each layer and spends more time.

\begin{figure}
\centering
% \hspace{-1mm}
\begin{subfigure}[t]{.45\textwidth}
\includegraphics [width=0.9\textwidth]{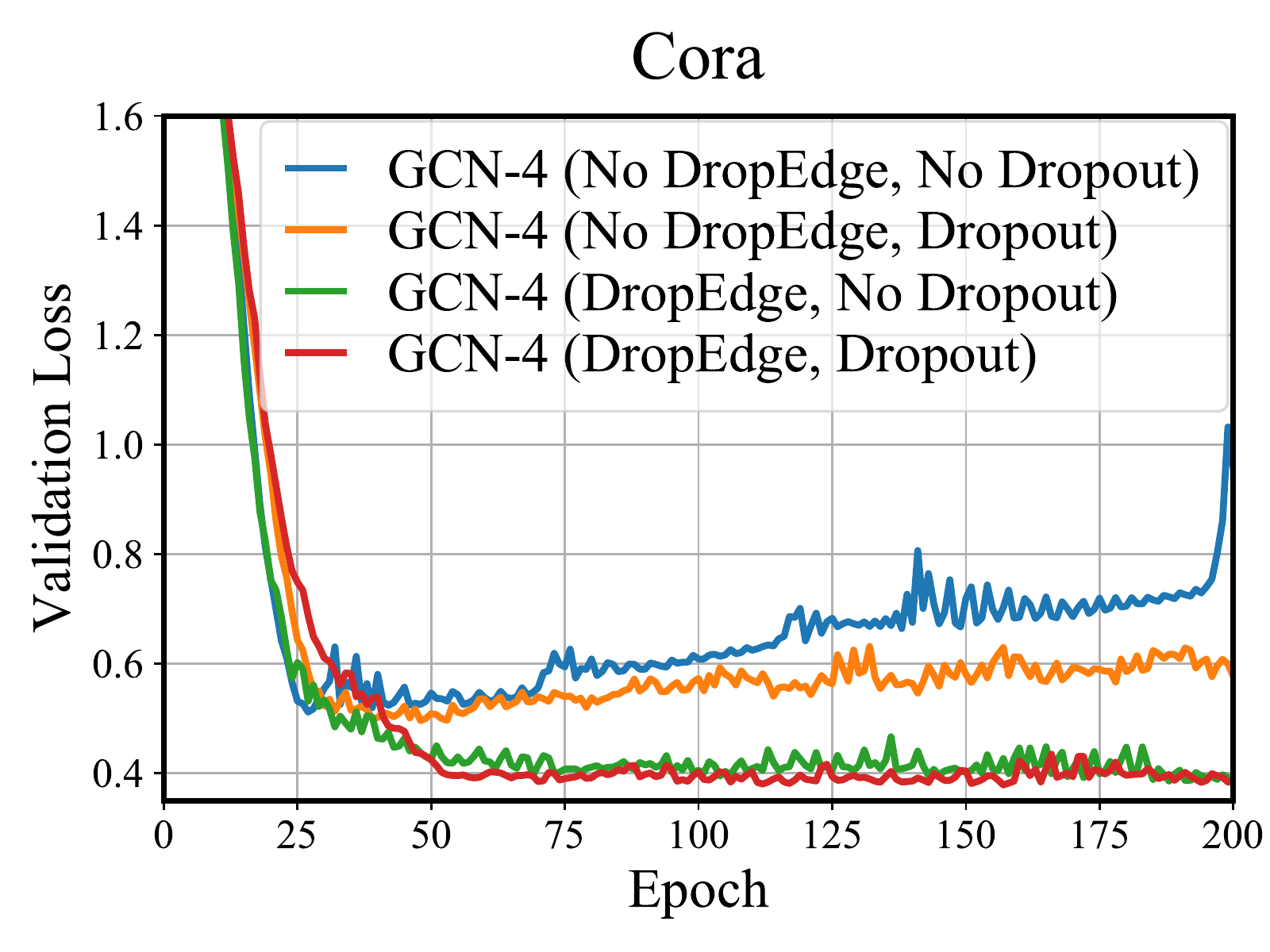}
\vskip -0.15in
\caption{Dropout vs DropEdge on Cora. }
\label{fig.abstudy.drop}
\end{subfigure}%
\hspace{5mm}
\begin{subfigure}[t]{.45\textwidth}
\includegraphics [width=0.9\textwidth]{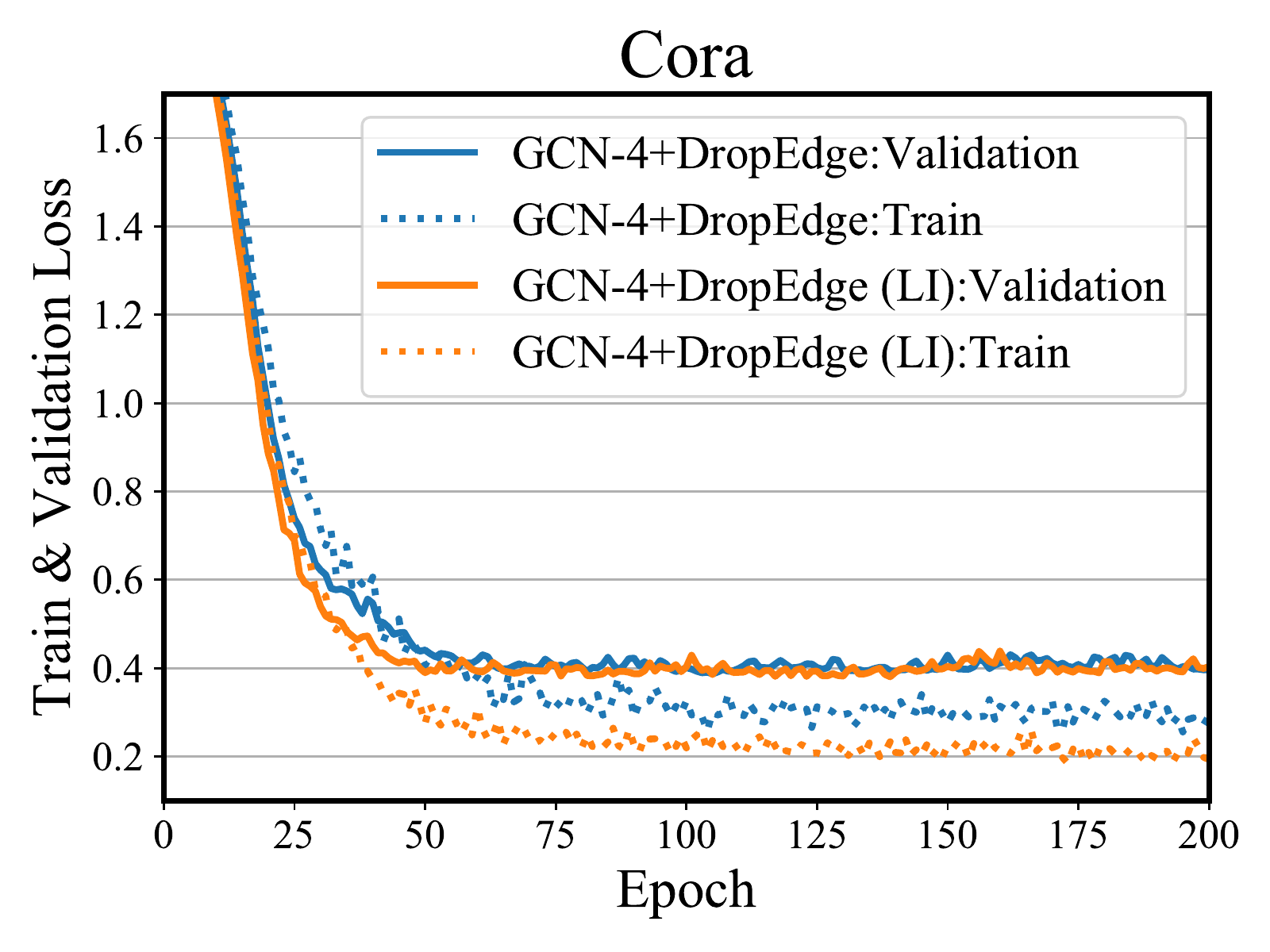}
\vskip -0.15in
\caption{Comparison between DropEdge and layer-wise (LW) DropEdge.}
\label{fig.abstudy.layer-independent}
\end{subfigure}%
\vskip -0.15in
\caption{}
\vskip -0.27in
\label{fig.abstudy}
\end{figure}

\section{Conclusion}

We have presented DropEdge, a novel and efficient technique to facilitate the development of deep Graph Convolutional Networks (GCNs). By dropping out a certain rate of edges by random, DropEdge includes more diversity into the input data to prevent over-fitting, and reduces message passing in graph convolution to alleviate over-smoothing. Considerable experiments on Cora, Citeseer, Pubmed and Reddit have verified that DropEdge can generally and consistently promote the performance of current popular GCNs, such as GCN, ResGCN, JKNet, IncepGCN, and GraphSAGE. It is expected that our research will open up a new venue on a more in-depth exploration of deep GCNs for broader potential applications.

\section{Acknowledgements}
This research was funded by National Science and Technology Major Project of the Ministry of Science and Technology of China (No. 2018AAA0102900).
Finally, Yu Rong wants to thank, in particular, the invaluable love and support from Yunman Huang over the years. Will you marry me?

\bibliography{iclr2020_conference}
\bibliographystyle{iclr2020_conference}
\clearpage

\appendix
\section{Appendix: Proof of Theorem 1}
To prove Theorem 1, we need to borrow the following definitions and corollaries from \citet{oono2019asymptotic}. First, we denote the maximum singular value of $\mW_l$ by $s_l$ and set $s:=\sup_{l\in \sN_+}s_l$. We assume that $\mW_l$ of all layers are initialized so that $s\le1$. Second, we denote the distance that induced as the Frobenius norm from $\mX$ to $\mathcal{M}$ by $d_\mathcal{M}(\mX):=\inf_{\mY\in \mathcal{M}} ||\mX-\mY||_\mathrm{F}$. Then, we recall Corollary 3 and Proposition 1 in \citet{oono2019asymptotic} as Corollary 1 below.
\begin{corollary}
\label{col:1}
Let $\lambda_1\le\cdots\le\lambda_N$ be the eigenvalues of $\hat{\mA}$, sorted in ascending order. Suppose the multiplicity of the largest eigenvalue $\lambda_N$ is $M(\le N)$, i.e., $\lambda_{N-M}<\lambda_{N-M+1}=\cdots=\lambda_N$ and the second largest eigenvalue is defined as
\begin{equation}
    \lambda :=  \max_{n=1}^{N-M}|\lambda_n|<|\lambda_N|.
\end{equation}
Let $\mE$ to be the eigenspace associated with $\lambda_{N-M+1},\cdots,  \lambda_N$. 
Then we have $\lambda<\lambda_N=1$, and
\begin{equation}
    d_\mathcal{M}(\mH^{(l)}) \le s_l\lambda d_\mathcal{M}(\mH^{(l-1)}), \label{equ:distance}
\end{equation}
where $\mathcal{M}:=\{\mE\mC|\mC\in \R^{M\times C}\}$. Besides, $s_l\lambda <1$, implying that the output of the $l$-th layer of GCN on $\gG$ exponentially approaches $\gM$.
\end{corollary}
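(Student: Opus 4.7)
The plan is to verify the three distinct claims in sequence: $\lambda_N = 1$, the spectral gap $\lambda < 1$, and the per-layer contraction $d_{\gM}(\mH^{(l)}) \le s_l \lambda\, d_{\gM}(\mH^{(l-1)})$. The final clause $s_l \lambda < 1$ then follows immediately from $s_l \le s \le 1$ combined with $\lambda < 1$.

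For the spectral facts I would invoke standard Perron--Frobenius theory applied to $\hat{\mA} = \hat{\mD}^{-1/2}(\mA+\mI)\hat{\mD}^{-1/2}$. This matrix is symmetric and similar (via $\hat{\mD}^{\pm 1/2}$) to the row-stochastic $\hat{\mD}^{-1}(\mA+\mI)$, so its spectrum lies in $[-1,1]$; the explicit eigenvector $\hat{\mD}^{1/2}\vone$ with eigenvalue $1$ forces $\lambda_N = 1$. Because the self-loops make every connected component aperiodic, $+1$ is the only eigenvalue of unit modulus, and its multiplicity $M$ equals the number of connected components of the augmented graph, which yields $\lambda < \lambda_N$. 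The principal eigenspace $\gM$ is then spanned by $M$ orthonormal vectors obtained by restricting $\hat{\mD}^{1/2}\vone$ to each component and normalizing; in particular, each basis column of $\mE$ is entrywise nonnegative, and the columns have pairwise disjoint supports.

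For the contraction, write $d_{\gM}(\mX) = \|\mP^{\perp}\mX\|_F$ with $\mP^{\perp} := \mI - \mE\mE^{\top}$, and decompose one GCL application $\mH^{(l)} = \sigma(\hat{\mA}\mH^{(l-1)}\mW^{(l)})$ into three pieces whose effects I bound separately. First, since $\gM$ is a spectral invariant subspace of the symmetric $\hat{\mA}$, the projection $\mP^{\perp}$ commutes with $\hat{\mA}$, and the restriction of $\hat{\mA}$ to $\gM^{\perp}$ has operator norm $\lambda$; hence $d_{\gM}(\hat{\mA}\mX) \le \lambda\, d_{\gM}(\mX)$. Second, since $\mP^{\perp}$ commutes with right-multiplication, $d_{\gM}(\mX\mW^{(l)}) \le s_l\, d_{\gM}(\mX)$ by Frobenius--operator submultiplicativity with $s_l = \|\mW^{(l)}\|_{\mathrm{op}}$. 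Third, I claim $d_{\gM}(\sigma(\mY)) \le d_{\gM}(\mY)$; chaining the three estimates in the order $\sigma \circ (\cdot\, \mW^{(l)}) \circ \hat{\mA}$ then delivers the stated inequality.

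The main obstacle is the ReLU step. Pointwise $1$-Lipschitzness of $\sigma$ only directly yields $\|\sigma(\mY) - \sigma(\mZ)\|_F \le \|\mY - \mZ\|_F$ for any fixed reference $\mZ$; to convert this into a bound on the infimum over $\gM$, I must exhibit a $\mZ \in \gM$ with $\sigma(\mZ) \in \gM$ as well. The key structural fact to exploit is that, since the columns of $\mE$ have disjoint supports and nonnegative entries, every $\mY = \mE\mC \in \gM$ has its $i$-th row equal to a positive scalar multiple of the $k(i)$-th row of $\mC$, where $k(i)$ indexes the component containing vertex $i$. Applying ReLU entrywise therefore produces $\sigma(\mE\mC) = \mE\, \sigma(\mC) \in \gM$. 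Choosing $\mZ = \mE\mE^{\top}\mY$ (the orthogonal projection of $\mY$ onto $\gM$) then yields $d_{\gM}(\sigma(\mY)) \le \|\sigma(\mY) - \sigma(\mZ)\|_F \le \|\mY - \mZ\|_F = d_{\gM}(\mY)$, closing the argument and completing the proof of the corollary.
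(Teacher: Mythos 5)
Your proof is correct, but note that the paper itself never proves this corollary: it is imported verbatim as Corollary~3 and Proposition~1 of \citet{oono2019asymptotic}, so there is no in-paper argument to compare against. What you have written is, in substance, a faithful reconstruction of the argument in that cited source. The three-way decomposition is the right one, and you correctly identify and resolve the only genuinely delicate point, namely why the ReLU does not increase the distance to $\gM$: the basis columns of $\mE$ (normalized restrictions of $\hat{\mD}^{1/2}\vone$ to the connected components) are entrywise nonnegative with disjoint supports, so each row of $\mE\mC$ is a nonnegative multiple of a single row of $\mC$, positive homogeneity of ReLU gives $\sigma(\mE\mC)=\mE\sigma(\mC)\in\gM$, and then $1$-Lipschitzness against the reference point $\mZ=\mE\mE^{\top}\mY$ closes the bound. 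The spectral facts ($\lambda_N=1$ with eigenvector $\hat{\mD}^{1/2}\vone$, multiplicity equal to the number of components, no eigenvalue $-1$ because the added self-loops make each component aperiodic) are also handled correctly, and $s_l\lambda<1$ indeed follows only because the paper separately assumes the weights are initialized with $s\le 1$ --- it is worth flagging that this is an assumption, not a consequence. The one cosmetic caveat is that your argument implicitly uses that $\mA$ has nonnegative weights and that $\hat{\mA}$ commutes with the orthogonal projection onto the span of $\mE$'s columns (true because that span is an invariant subspace of a symmetric matrix); both hold here, but spelling them out would make the chain $d_{\gM}(\sigma(\hat{\mA}\mH^{(l-1)}\mW^{(l)}))\le s_l\lambda\, d_{\gM}(\mH^{(l-1)})$ fully self-contained.
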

% \begin{remark}
% \label{remark:M}
% Let $\{\gG_1,\dots,\gG_M\}$ be the $M$ connected components in $\gG$. Then, we have $\lambda<\lambda_N=1$ for $\hat{\mA}$ in GCN, $\hat{\mA}$ contains the multiplicity of the largest eigenvalue $\lambda_N$ as $M$, namely, $\lambda_{N-M+1}=\cdots=\lambda_N=1$, and $\lambda < 1$. 
% \end{remark}

We also need to adopt some concepts from \citet{Lovasz1993} in proving Theorem 1. Consider the graph $\gG$ as an electrical network, where each edge represents an unit resistance. Then the effective resistance, $R_{st}$ from node $s$ to node $t$ is defined as the total resistance between node $s$ and $t$. According to Corollary 3.3 and Theorem 4.1 (i) in \citet{Lovasz1993}, we can build the connection between $\lambda$ and $R_{st}$ for each connected component via commute time as the following inequality.
\begin{equation}
\label{eq:res}
    \lambda \geq 1-\frac{1}{R_{st}}(\frac{1}{d_s}+\frac{1}{d_t}).
\end{equation}

% Further, we define the \emph{minimal inverse resistance} as
% \begin{equation}
%     \phi = \inf_{s,t} \frac{1}{R_{st}}(\frac{1}{d_s}+\frac{1}{d_t}).
% \end{equation}
% By \autoref{eq:res}, we have 
% \begin{equation}
% \lambda\geq 1-\phi$.    
% \end{equation}

% Moreover, the conductance of the graph is defined as the following.
% \begin{definition}
% Let $\gG$ as a graph and $\sS \subset\sV$, $\sS \neq \emptyset$. The conductance of the $\mathrm{cut}(\sS, \bar{\sS})$ is defined as:
%     \begin{equation*}
%         \Phi(\sS) = \frac{|\{e_{v_i,v_j}|v_i\in \sS, v_j\in \bar{\sS}\}|}{\min(|\sS|,|\bar{\sS}|)}.
%     \end{equation}
%     and the conductance of the graph is defined as
%     \begin{equation*}
%         \psi = \min_{\sS} \Phi(\sS)
%     \end{equation}
% \end{definition}
% By the graph theory in \cite{Lovasz1993}, the conductance of the graph is bounded by $\lambda$ as:
% \begin{equation}
%     \label{equ:mixing_time}
%     \frac{\psi^2}{8} \le 1-\lambda.
% \end{equation}

Prior to proving Theorem 1, we first derive the lemma below.
\begin{lemma}
\label{lemma:mixing_time}
The $\epsilon$-smoothing happens whenever the layer number satisfies
\begin{equation}
\label{Eq:l}
l \geq \hat{l}=\lceil\frac{\log \frac{\epsilon}{d_{\mathcal{M}}(\mX)}}{\log (s\lambda)}\rceil,
\end{equation}
where $\lceil\cdot\rceil$ computes the ceil of the input. It means $\hat{l}\geq l^{\ast}$.
\end{lemma}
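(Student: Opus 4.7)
The plan is to iterate the one-step contraction from Corollary~\ref{col:1} and then invert the resulting geometric bound to extract the smallest layer index that guarantees the distance has dropped below $\epsilon$. Since $H^{(0)}=X$ by convention and $s_l\le s$ for every layer, Corollary~\ref{col:1} gives a uniform contraction factor $s\lambda<1$, so telescoping the inequality \eqref{equ:distance} is the natural first move.

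Concretely, I would first argue by induction on $l$ that
\begin{equation*}
d_{\mathcal{M}}(\mH^{(l)}) \;\le\; \Bigl(\prod_{k=1}^{l} s_k\Bigr)\lambda^{l}\, d_{\mathcal{M}}(\mX) \;\le\; (s\lambda)^{l}\, d_{\mathcal{M}}(\mX),
\end{equation*}
using Corollary~\ref{col:1} at each step and the bound $s_k\le s$. This reduces the $\epsilon$-smoothing criterion $d_{\mathcal{M}}(\mH^{(l)})<\epsilon$ to the sufficient condition $(s\lambda)^{l}\, d_{\mathcal{M}}(\mX)<\epsilon$.

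Next I would solve this scalar inequality for $l$. Taking logarithms and being careful about the sign of $\log(s\lambda)$ (which is negative because $s\lambda<1$), the inequality is equivalent to $l\log(s\lambda)<\log(\epsilon/d_{\mathcal{M}}(\mX))$, and dividing by the negative quantity $\log(s\lambda)$ flips the direction to $l>\log(\epsilon/d_{\mathcal{M}}(\mX))/\log(s\lambda)$. The smallest integer $l$ satisfying this is exactly $\hat l=\lceil \log(\epsilon/d_{\mathcal{M}}(\mX))/\log(s\lambda)\rceil$, which establishes \eqref{Eq:l}.

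For the final claim $\hat l\ge l^{\ast}$, I would invoke the definition of $l^{\ast}$ as the \emph{minimum} layer at which the $\epsilon$-smoothing condition holds: since $\hat l$ is itself a layer for which the condition is guaranteed to hold (by the argument above), the minimum can only be smaller or equal. The main conceptual point to flag, rather than a technical obstacle, is the sign-flip when dividing by $\log(s\lambda)$; the rest is a routine telescoping of the contraction bound already established by Corollary~\ref{col:1}, so I do not expect any serious difficulty beyond verifying the base case $d_{\mathcal{M}}(\mH^{(0)})=d_{\mathcal{M}}(\mX)$.
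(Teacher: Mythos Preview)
Your proposal is correct and follows essentially the same argument as the paper: telescope the one-step contraction from Corollary~\ref{col:1} to obtain $d_{\mathcal{M}}(\mH^{(l)})\le (s\lambda)^l d_{\mathcal{M}}(\mX)$, take logarithms with attention to the sign of $\log(s\lambda)$, and then conclude $\hat l\ge l^{\ast}$ from minimality of $l^{\ast}$. The only cosmetic difference is that you make the base case $\mH^{(0)}=\mX$ and the induction explicit, which the paper leaves implicit.
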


%\begin{align}
%\end{align}

\begin{proof}
We start our proof from Inequality~\ref{equ:distance}, leading to
\begin{align}
    \notag d_{\mathcal{M}}(\mH^{(l)})&\le s_l\lambda d_{\mathcal{M}}(\mH^{(l-1)})\\
    \notag &\le (\prod_{i=1}^{l}s_i)\lambda^l d_{\mathcal{M}}(\mX)\\
    \notag &\le s^l\lambda^l d_{\mathcal{M}}(\mX)
\end{align}
When it reaches $\epsilon$-smoothing, the following inequality should be satisfied as
\begin{align}
    d_\mathcal{M}(\mH^{(l)}) \le s^{l}\lambda^l d_{\mathcal{M}}(\mX) &< \epsilon, \nonumber\\
    \Rightarrow l\log s\lambda &< \log \frac{\epsilon}{d_{\mathcal{M}}(\mX)}\label{ineq:lemma2_step1}.
\end{align}
Since $0 \le s\lambda < 1$, then $\log s\lambda < 0$. Therefore, the Inequality~\ref{ineq:lemma2_step1} becomes
\begin{equation}
\label{Eq:lambda}
    l > \frac{\log \frac{\epsilon}{d_{\mathcal{M}}(\mX)}}{\log s\lambda}.
\end{equation}
Clearly, we have $\hat{l}\geq l^\ast$ since $l^*$ is defined as the minimal layer that satisfies $\epsilon$-smoothing.
% \begin{equation*}
% \notag  l^* = \lceil\frac{\log \frac{\epsilon}{d_{\mathcal{M}}(\mX)}}{\log s\left(1-\frac{\psi^2}{8}\right)}\rceil,
% \end{equation*}
The proof is concluded.
\end{proof}

% \begin{corollary}
% By decreasing the conductance of a graph, the lower bound of the mixing layer is increasing.
% \end{corollary}

Now, we prove Theorem~\ref{The:smoothing}.
% , i.e.
% {\em the relaxed smoothing layer only increases: $\hat{l}(\gM,\epsilon)\leq \hat{l}(\gM',\epsilon)$}, when the dropped edge does not split any $\gG_m$ into two connected sub-graphs.
\begin{proof}
% According to Lemma~\ref{lemma:mixing_time}, the graph with smaller conductance needs larger $\epsilon$-smoothing layer since $\log \frac{\epsilon}{d_{\mathcal{M}}(\mX)} < 0$.
% %Therefore, in order to reach the same level of $\epsilon$-smoothing, .
% To reduce the conductance of a graph, 
Our proof relies basically on the connection between $\lambda$ and $R_{st}$ in Equation (\ref{eq:res}).
We recall Corollary 4.3 in \cite{Lovasz1993} that removing any edge from $\gG$ can only increase any $R_{st}$, then according to (\ref{eq:res}), the lower bound of $\lambda$  only increases if the removing edge is not connected to either $s$ or $t$ (\emph{i.e.} the degree $d_s$ and $d_t$ keep unchanged). 
% We then prove that $\lambda$ must increase after sufficient edges are removed from $\gG$ before one of its connected component is dis-connected into two parts. 
Since there must exist a node pair satisfying $R_{st}=\infty$ after sufficient edges (except self-loops) are removed from one connected component of $\gG$, we have the infinite case $\lambda =1$ given in Equation (\ref{eq:res}) that both $1/d_s$ and $1/d_t$ are consistently bounded by a finite number,\emph{i.e.} $1$. It implies $\lambda$ does increase before it reaches $\lambda=1$.  As $\hat{l}$ is positively related to $\lambda$ (see the right side of Equation (\ref{Eq:l}) where $log(s\lambda)$ <0), we have proved the first part of Theorem 1, \emph{i.e.}, {\em $\hat{l}(\gM, \epsilon)\leq \hat{l}(\gM', \epsilon)$} after removing sufficient edges.

When there happens $R_{st}=\infty$, the connected component is disconnected into two parts, which leads to the increment of the dimension of $\mathcal{M}$ by 1 and proves the second part of Theorem 1. i.e.,
{\em the information loss is decreased: $N-\text{dim}(\gM) > N-\text{dim}(\gM')$}. 

% It means that according to Inequality~\ref{equ:conductance_bound}, the conductance of the graph can only decrease if one edge is removed from the graph $\mathcal{G}$. Consequently, it enlarges $l^*$ after DropEdge by Eq.~\ref{Eq:l} which proves the first part of Theorem 1.
\end{proof}
% The proof of the second part of Theorem~\ref{The:smoothing} is straight forward. We sketch the proof here.
% According to the graph theory, the multiplicity of the largest eigenvalue is equal to the number of connected graph in $\gG$. Therefore, when one connected graph $\gG_m$ in $\gG$ is dis-connected into two parts as $\gG_m \rightarrow \{\gG_m, \gG_{M+1}\}$ by deleting one edge, 

\newpage
\section{Appendix: More Details in Experiments}
\subsection{Datasets Statistics}
\paragraph{Datasets} The statistics of all datasets are summarized in Table~\ref{table:data}.
% Table generated by Excel2LaTeX from sheet 'Datasets Table'
\begin{table}[htbp]
  \centering
  \caption{Dataset Statistics}
  \vspace{-2ex}
  \small
% Table generated by Excel2LaTeX from sheet 'Datasets Table'
\begin{tabular}{lrrrrcl}
\hline
Datasets & \multicolumn{1}{l}{Nodes} & \multicolumn{1}{l}{Edges} & \multicolumn{1}{l}{Classes} & \multicolumn{1}{l}{Features} & \multicolumn{1}{l}{Traing/Validation/Testing} & Type \\
\hline
Cora  & 2,708 & 5,429 & 7     & 1,433 & 1,208/500/1,000 & Transductive \\
Citeseer & 3,327 & 4,732 & 6     & 3,703 & 1,812/500/1,000 & Transductive \\
Pubmed & 19,717 & 44,338 & 3     & 500   & 18,217/500/1,000 & Transductive \\
Reddit & 232,965 & 11,606,919 & 41    & 602   & 152,410/23,699/55,334 & Inductive \\
\hline
\end{tabular}%
  \label{table:data}%
\end{table}%

\subsection{Models and Backbones}
\paragraph{Backbones}
Other than the multi-layer GCN, we replace the CNN layer with graph convolution layer to implement three popular backbones recasted from image classification. They are residual network (ResGCN)\citep{he2016deep,li2019can}, inception network (IncepGCN)\citep{szegedy2016} and dense network (JKNet) \citep{Huang2017,xu2018representation}. Figure~\ref{fig:backbones} shows the detailed architectures of  four backbones. Furthermore, we employ one input GCL and one output GCL on these four backbones. Therefore, the layers in ResGCN, JKNet and InceptGCN are at least 3 layers. All backbones are implemented in Pytorch~\citep{paszke2017automatic}. For GraphSAGE, we utilize the Pytorch version implemented by DGL\citep{wang2019dgl}.

\begin{figure}[htbp]
    \centering
    \includegraphics[width=0.85\textwidth]{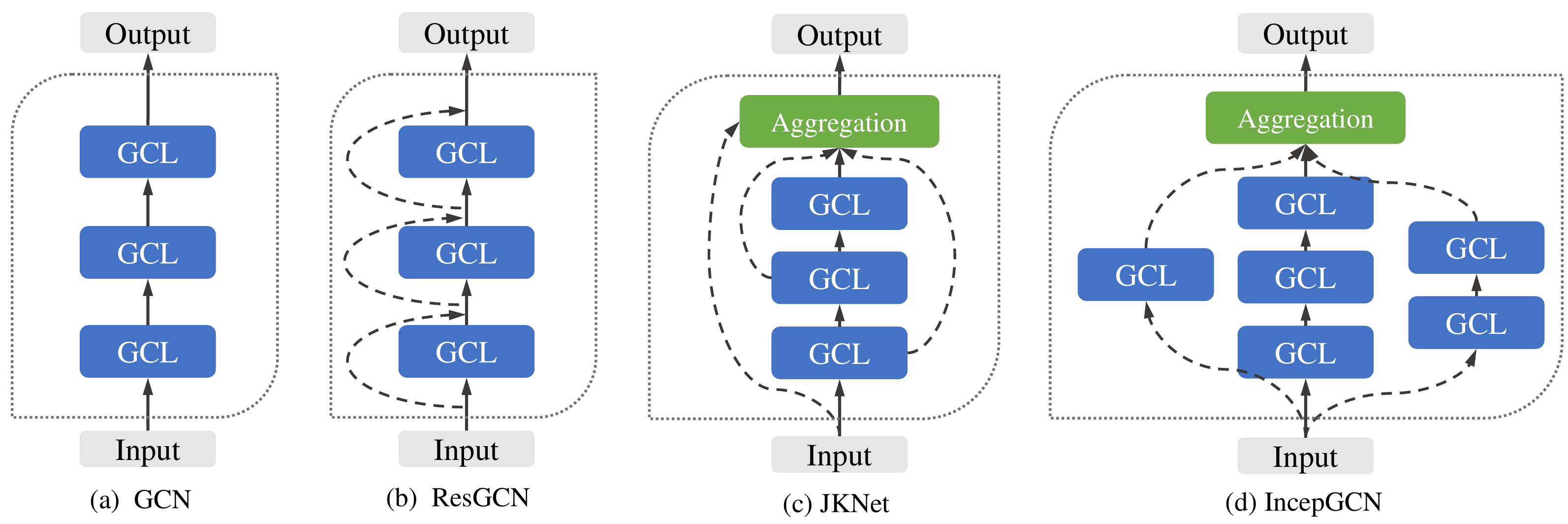}
    \caption{The illustration of four backbones. GCL indicates graph convolutional layer.}
    \label{fig:backbones}
\end{figure}

\paragraph{Self Feature Modeling}
We also implement a variant of graph convolution layer with self feature modeling \citep{fout2017protein}:
\begin{align}
    \mathbf{H}^{(l+1)} = \sigma\left(\hat{\mathbf{A}}\mathbf{H}^{(l)}\mathbf{W}^{(l)} + \mathbf{H}^{(l)}\mathbf{W}_{\text{self}}^{(l)}\right),
\end{align}
where $\mathbf{W}_{\text{self}}^{(l)}\in \mathbb{R}^{C_l\times C_{l-1}}$.

\paragraph{Hyper-parameter Optimization}
We adopt the Adam optimizer for model training. To ensure the re-productivity of the results, the seeds of the random numbers of all experiments are set to the same. We fix the number of training epoch to $400$ for all datasets. All experiments are conducted on a NVIDIA Tesla P40 GPU with 24GB memory.

Given a model with $n\in\{2,4,8,16,32,64\}$ layers, the hidden dimension is $128$ and we conduct a random search strategy to optimize the other hyper-parameter for each backbone in \textsection~\ref{sec.cmpdropedge}.  The decryptions of hyper-parameters are summarized in Table~\ref{tab:hyperparameterdescription}. Table~\ref{tab:normalization} depicts the types of the normalized adjacency matrix that are selectable in the ``normalization'' hyper-parameter. For GraphSAGE, the aggregation type like GCN, MAX, MEAN, or LSTM is a hyper-parameter as well.

For each model, we try 200 different hyper-parameter combinations via random search and select the best test accuracy as the result. Table~\ref{tab:hyperparameterdetails} summaries the hyper-parameters of each backbone with the best accuracy on different datasets and their best accuracy are reported in Table~\ref{tab:cmpwithdropnode}.

\subsection{The Validation Loss on Different Backbones w and w/o DropEdge.}
Figure~\ref{fig.dropvallosscmpaddtional} depicts the additional results of validation loss on different backbones w and w/o DropEdge.
\begin{figure}[htbp]
\centering
\includegraphics [width=0.31\textwidth]{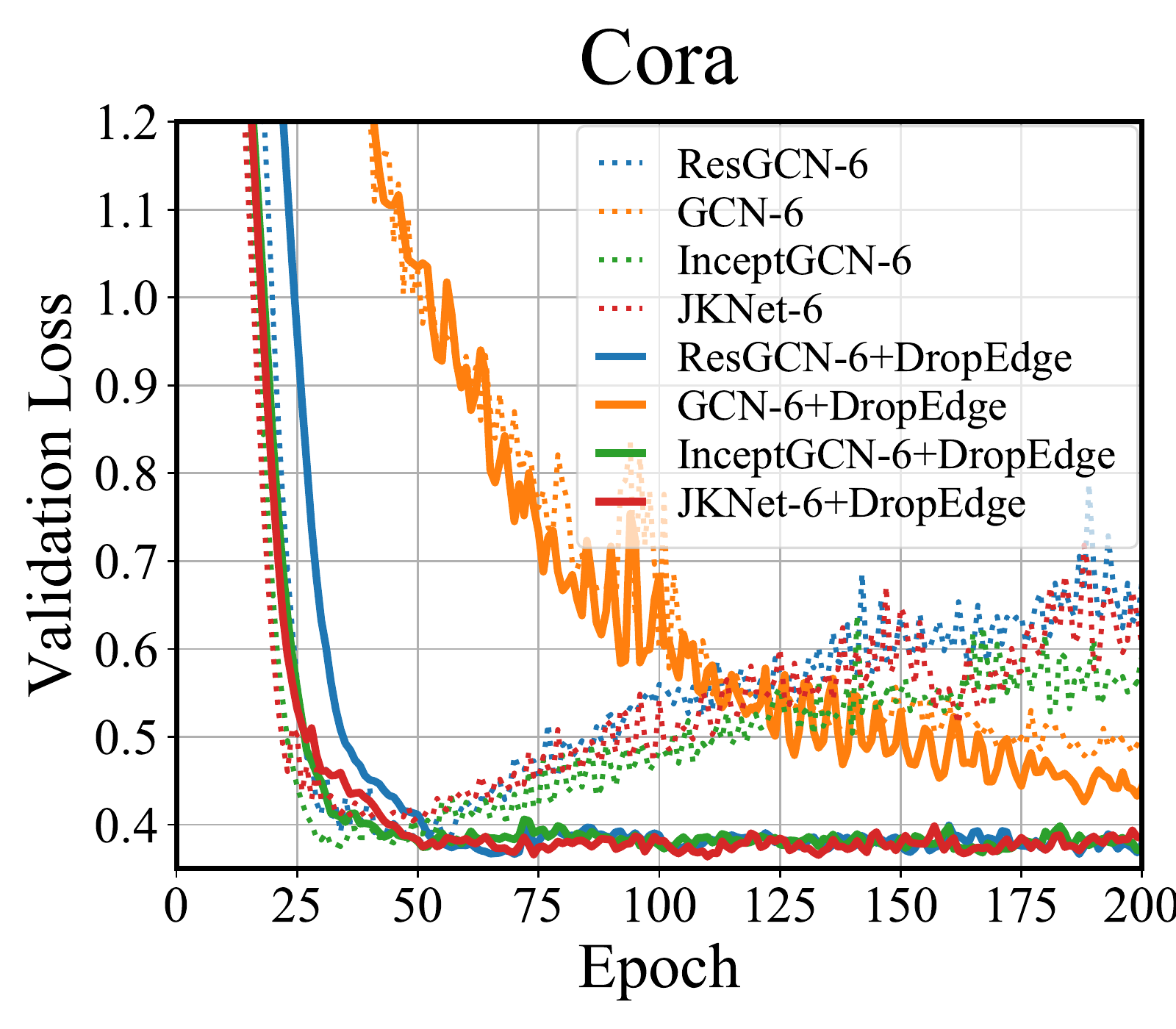}
\includegraphics [width=0.31\textwidth]{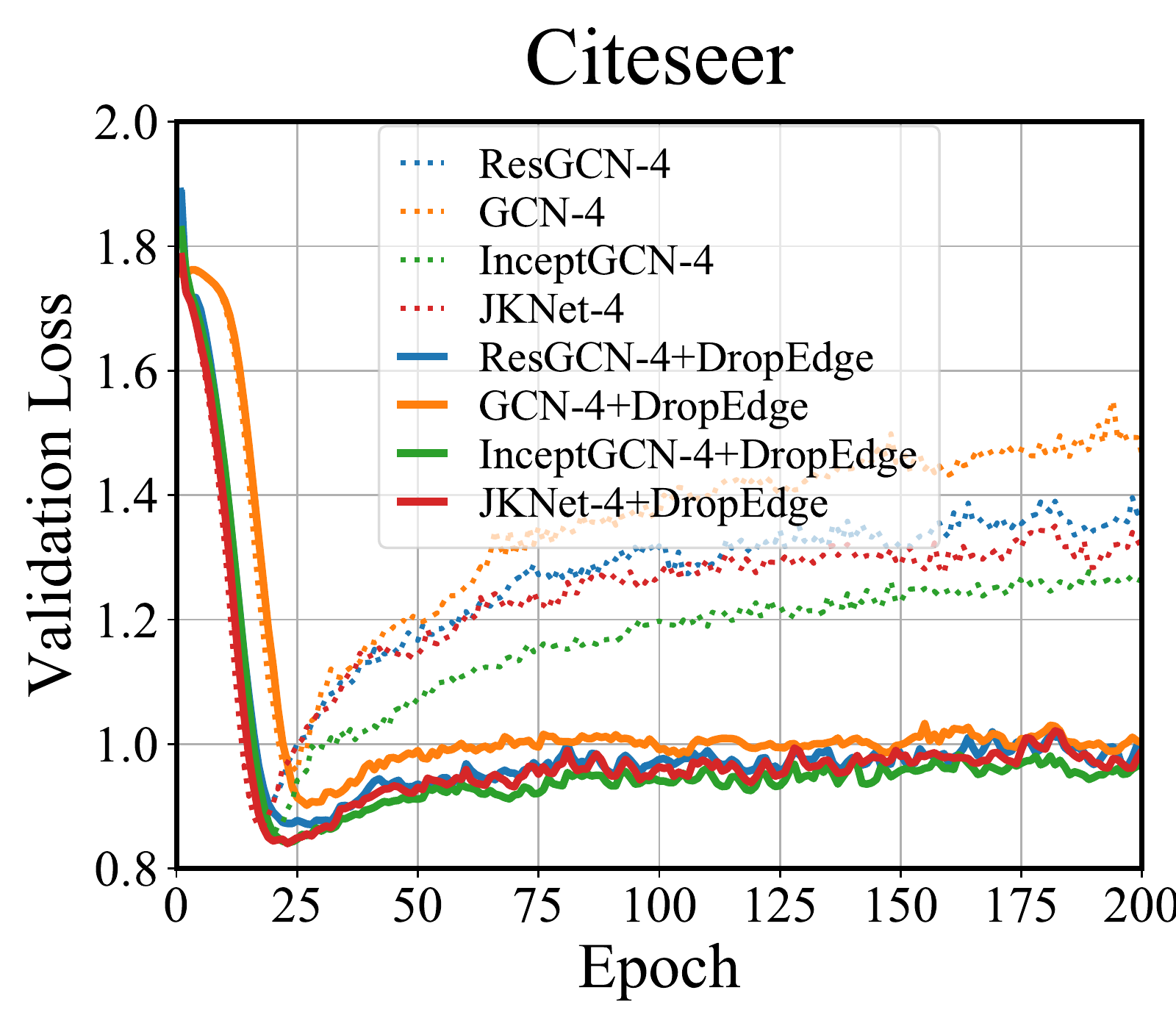}
\includegraphics [width=0.31\textwidth]{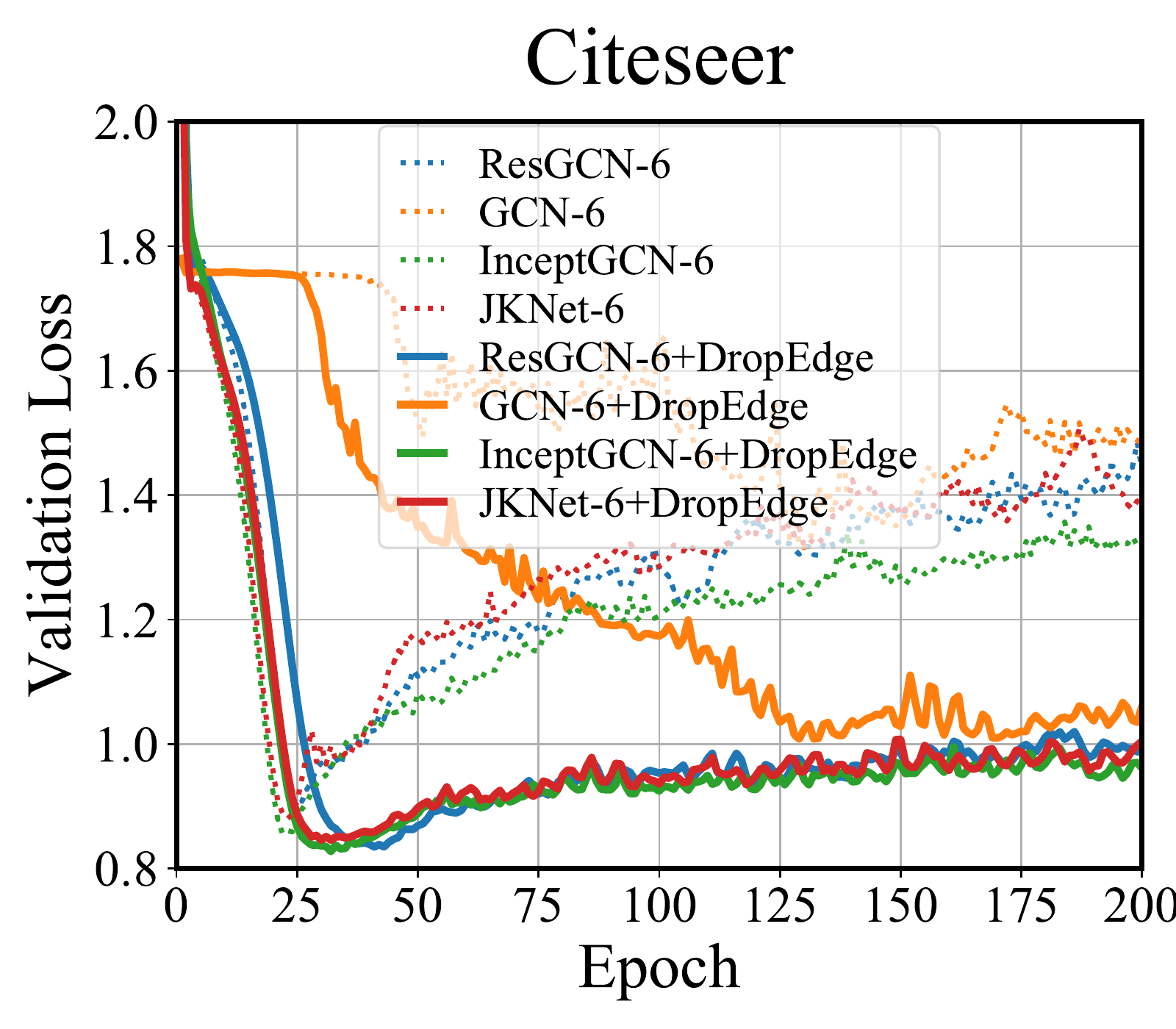}
\caption{The validation loss on different backbones w and w/o DropEdge. GCN-$n$ denotes PlainGCN of depth $n$; similar denotation follows for other backbones.}
\label{fig.dropvallosscmpaddtional}
\end{figure}

\subsection{The Ablation Study on Citeseer}
Figure~\ref{fig.abstudyaddtional.drop} shows the ablation study of Dropout vs. DropEdge and Figure~\ref{fig.abstudy.layer-independent} depicts a comparison between the proposed DropEdge and the layer-wise DropEdge on Citeseer.
\begin{figure}[htbp]
\centering
\begin{subfigure}[t]{.45\textwidth}
\includegraphics [width=0.98\textwidth]{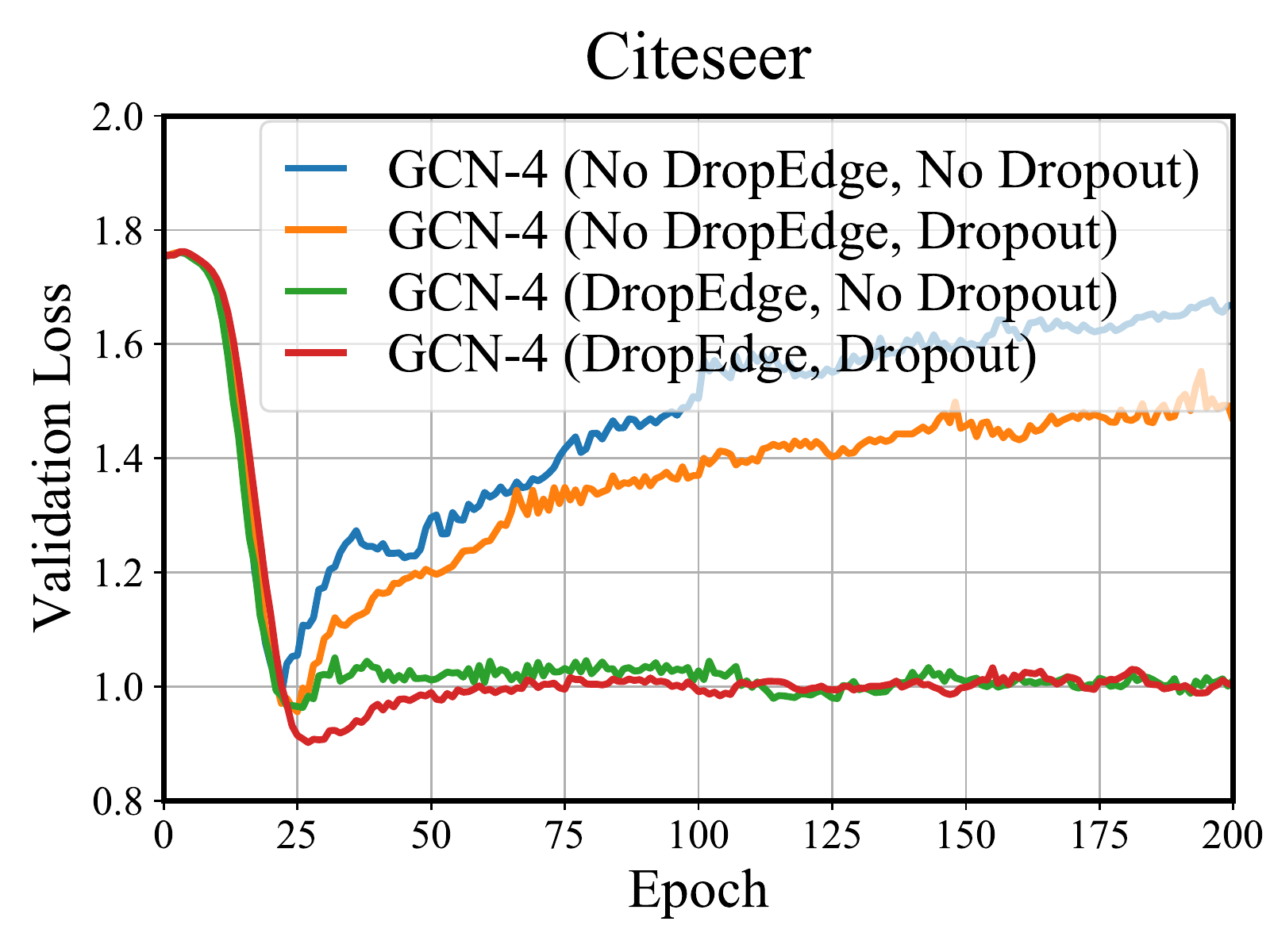}
\caption{Ablation study of Dropout vs. DropEdge on Citeseer. }
\label{fig.abstudyaddtional.drop}
\end{subfigure}%
\hspace{5mm}
\begin{subfigure}[t]{.45\textwidth}
\includegraphics [width=0.98\textwidth]{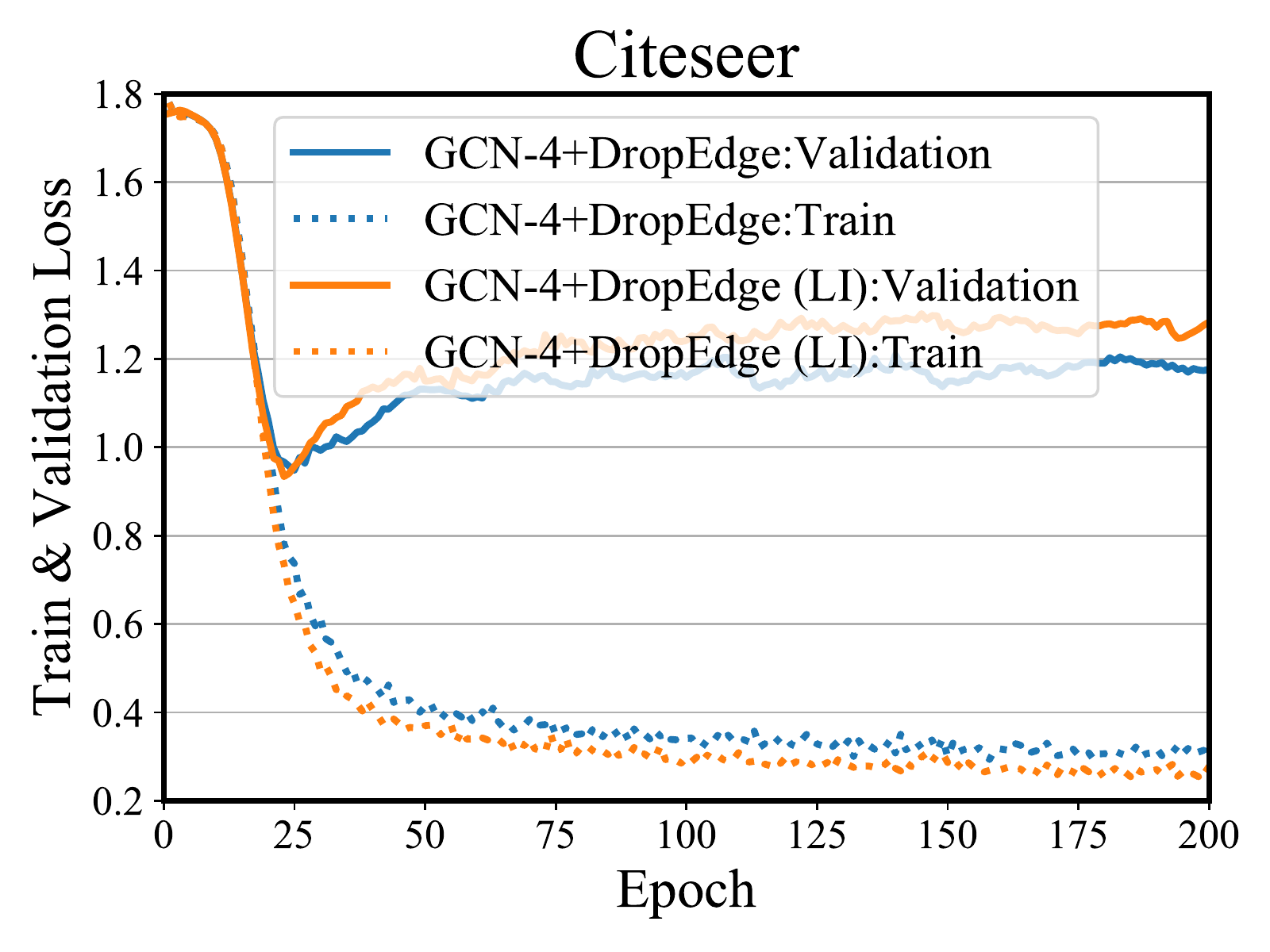}
\caption{Performance comparison of layer-wise DropEdge.}
\label{fig.abstudyaddtional.layer-independent}
\end{subfigure}%
\vskip -0.15in
\caption{}
\vskip -0.15in
\label{fig.abstudyaddtional}
\end{figure}

% Table generated by Excel2LaTeX from sheet 'Sheet4'
\begin{table}[htbp]
  \centering
  \caption{Hyper-parameter Description}
  \label{tab:hyper-desc}
      \vspace{-2ex}
    \small
    \begin{tabular}{l|l}
    \hline
    Hyper-parameter & Description \\
    \hline
    lr    & learning rate \\
    weight-decay & L2 regulation weight \\
    sampling-percent     & edge preserving percent ($1-p$) \\
    dropout & dropout rate \\
    normalization & the propagation models \citep{Kipf2017} \\
    withloop & using self feature modeling \\
    withbn & using batch normalization  \\
    \hline
    \end{tabular}%
  \label{tab:hyperparameterdescription}%
\end{table}%

% Table generated by Excel2LaTeX from sheet 'Normalization'
\begin{table}[htbp]
  \centering
  \caption{The normalization / propagation models}
      \vspace{-2ex}
    \scriptsize
    \begin{tabular}{l|l|l}
    \hline
    Description & Notation & $\mA'$ \\
    \hline
    First-order GCN & FirstOrderGCN & $\mI + \mD^{-1/2}\mA\mD^{-1/2}$ \\
    Augmented Normalized Adjacency & AugNormAdj & $(\mD + \mI)^{-1/2} ( \mA + \mI ) (\mD + \mI)^{-1/2}$ \\
    Augmented Normalized Adjacency with Self-loop & BingGeNormAdj & $\mI + (\mD + \mI)^{-1/2} (\mA + \mI) (\mD + \mI)^{-1/2}$ \\
    Augmented Random Walk & AugRWalk & $(\mD + \mI)^{-1}(\mA + \mI)$\\
    \hline
    \end{tabular}%
  \label{tab:normalization}%
\end{table}%

% Table generated by Excel2LaTeX from sheet 'Best Param'
\begin{table}[p]
  \centering
  \caption{The hyper-parameters of best accuracy for each backbone on all datasets.}
  \vspace{-2ex}
    \small
    \begin{tabular}{cl|r|r|p{0.6\textwidth}}
    \hline
    \multicolumn{1}{l}{Dataset} & Backbone & \multicolumn{1}{l|}{nlayers} & \multicolumn{1}{l|}{Acc.} & Hyper-parameters \\
    \hline
    \multirow{5}[10]{*}{Cora} & GCN   & 4     & 0.876 & lr:0.010, weight-decay:5e-3, sampling-percent:0.7, dropout:0.8, normalization:FirstOrderGCN \\
\cline{2-5}          & ResGCN & 4     & 0.87  & lr:0.001, weight-decay:1e-5, sampling-percent:0.1, dropout:0.5, normalization:FirstOrderGCN \\
\cline{2-5}          & JKNet & 16    & 0.88  & lr:0.008, weight-decay:5e-4, sampling-percent:0.2, dropout:0.8, normalization:AugNormAdj \\
\cline{2-5}          & IncepGCN & 8     & 0.882 & lr:0.010, weight-decay:1e-3, sampling-percent:0.05, dropout:0.5, normalization:AugNormAdj \\
\cline{2-5}          & GraphSage & 4     & 0.881 &  lr:0.010, weight-decay:5e-4, sampling-percent:0.4, dropout:0.5, aggregator:mean \\
    \hline
    \multirow{5}[10]{*}{Citeseer} & GCN   & 4     & 0.792 & lr:0.009, weight-decay:1e-3, sampling-percent:0.05, dropout:0.8, normalization:BingGeNormAdj, withloop, withbn \\
\cline{2-5}          & ResGCN & 16    & 0.794 &  lr:0.001, weight-decay:5e-3, sampling-percent:0.5, dropout:0.3, normalization:BingGeNormAdj, withloop \\
\cline{2-5}          & JKNet & 8     & 0.802 &  lr:0.004, weight-decay:5e-5, sampling-percent:0.6, dropout:0.3, normalization:AugNormAdj, withloop \\
\cline{2-5}          & IncepGCN & 8     & 0.805 &  lr:0.002, weight-decay:5e-3, sampling-percent:0.2, dropout:0.5, normalization:BingGeNormAdj, withloop \\
\cline{2-5}          & GraphSage & 2     & 0.8   & lr:0.001, weight-decay:1e-4, sampling-percent:0.1, dropout:0.5, aggregator:mean \\
    \hline
    \multirow{5}[10]{*}{Pubmed} & GCN   & 4     & 0.913 & lr:0.010, weight-decay:1e-3, sampling-percent:0.3, dropout:0.5, normalization:BingGeNormAdj, withloop, withbn \\
\cline{2-5}          & ResGCN & 32    & 0.911 &  lr:0.003, weight-decay:5e-5, sampling-percent:0.7, dropout:0.8, normalization:AugNormAdj, withloop, withbn \\
\cline{2-5}          & JKNet & 64    & 0.916 &  lr:0.005, weight-decay:1e-4, sampling-percent:0.5, dropout:0.8, normalization:AugNormAdj, withloop,withbn \\
\cline{2-5}          & IncepGCN & 4     & 0.916 & lr:0.002, weight-decay:1e-5, sampling-percent:0.5, dropout:0.8, normalization:BingGeNormAdj, withloop, withbn \\
\cline{2-5}          & GraphSage & 8     & 0.917 &   lr:0.007, weight-decay:1e-4, sampling-percent:0.8, dropout:0.3, aggregator:mean \\
    \hline
    \multirow{5}[10]{*}{Reddit} & GCN   & 4     & 0.9671 &  lr:0.005, weight-decay:1e-4, sampling-percent:0.6, dropout:0.5, normalization:AugRWalk, withloop \\
\cline{2-5}          & ResGCN & 16    & 0.9648 & lr:0.009, weight-decay:1e-5, sampling-percent:0.2, dropout:0.5, normalization:BingGeNormAdj, withbn \\
\cline{2-5}          & JKNet & 8     & 0.9702 & lr:0.010, weight-decay:5e-5, sampling-percent:0.6, dropout:0.5, normalization:BingGeNormAdj, withloop,withbn \\
\cline{2-5}          & IncepGCN & 8     & 0.9687 &  lr:0.008, weight-decay:1e-4, sampling-percent:0.4, dropout:0.5, normalization:FirstOrderGCN, withbn \\
\cline{2-5}          & GraphSAGE & 4     & 0.9654 &  lr:0.005, weight-decay:5e-5, sampling-percent:0.2, dropout:0.3, aggregator:mean \\
    \hline
    \end{tabular}%
  \label{tab:hyperparameterdetails}%
\end{table}%

\begin{landscape}
    \topskip0pt
    \vspace*{\fill}
% Table generated by Excel2LaTeX from sheet 'Drop vs no Drop with PT'
\begin{table}[htbp]
  \centering
  \caption{Accuracy (\%) comparisons on different backbones with and without DropEdge}
    \vspace{-1ex}
    \small
    \begin{tabular}{cl|rr|rr|rr|rr|rr|rr}
    \hline
    \multirow{2}[4]{*}{Dataset} & \multicolumn{1}{c|}{\multirow{2}[4]{*}{Backbone}} & \multicolumn{2}{c|}{2} & \multicolumn{2}{c|}{4} & \multicolumn{2}{c|}{8} & \multicolumn{2}{c|}{16} & \multicolumn{2}{c|}{32} & \multicolumn{2}{c}{64} \\
\cline{3-14}          &       & \multicolumn{1}{c}{Orignal} & \multicolumn{1}{c|}{DropEdge} & \multicolumn{1}{c}{Orignal} & \multicolumn{1}{c|}{DropEdge} & \multicolumn{1}{c}{Orignal} & \multicolumn{1}{c|}{DropEdge} & \multicolumn{1}{c}{Orignal} & \multicolumn{1}{c|}{DropEdge} & \multicolumn{1}{c}{Orignal} & \multicolumn{1}{c|}{DropEdge} & \multicolumn{1}{c}{Orignal} & \multicolumn{1}{c}{DropEdge} \\
    \hline
    \multirow{5}[2]{*}{Cora} & GCN   & 86.10 & 86.50 & 85.50 & 87.60 & 78.70 & 85.80 & 82.10 & 84.30 & 71.60 & 74.60 & 52.00 & 53.20 \\
          & ResGCN & -     & -     & 86.00 & 87.00 & 85.40 & 86.90 & 85.30 & 86.90 & 85.10 & 86.80 & 79.80 & 84.80 \\
          & JKNet & -     & -     & 86.90 & 87.70 & 86.70 & 87.80 & 86.20 & 88.00 & 87.10 & 87.60 & 86.30 & 87.90 \\
          & IncepGCN & -     & -     & 85.60 & 87.90 & 86.70 & 88.20 & 87.10 & 87.70 & 87.40 & 87.70 & 85.30 & 88.20 \\
          & GraphSAGE & 87.80 & 88.10 & 87.10 & 88.10 & 84.30 & 87.10 & 84.10 & 84.50 & 31.90 & 32.20 & 31.90 & 31.90 \\
    \hline
    \multirow{5}[2]{*}{Citeseer} & GCN   & 75.90 & 78.70 & 76.70 & 79.20 & 74.60 & 77.20 & 65.20 & 76.80 & 59.20 & 61.40 & 44.60 & 45.60 \\
          & ResGCN & -     & -     & 78.90 & 78.80 & 77.80 & 78.80 & 78.20 & 79.40 & 74.40 & 77.90 & 21.20 & 75.30 \\
          & JKNet & -     & -     & 79.10 & 80.20 & 79.20 & 80.20 & 78.80 & 80.10 & 71.70 & 80.00 & 76.70 & 80.00 \\
          & IncepGCN & -     & -     & 79.50 & 79.90 & 79.60 & 80.50 & 78.50 & 80.20 & 72.60 & 80.30 & 79.00 & 79.90 \\
          & GraphSAGE & 78.40 & 80.00 & 77.30 & 79.20 & 74.10 & 77.10 & 72.90 & 74.50 & 37.00 & 53.60 & 16.90 & 25.10 \\
    \hline
    \multirow{5}[2]{*}{Pubmed} & GCN   & 90.20 & 91.20 & 88.70 & 91.30 & 90.10 & 90.90 & 88.10 & 90.30 & 84.60 & 86.20 & 79.70 & 79.00 \\
          & ResGCN & -     & -     & 90.70 & 90.70 & 89.60 & 90.50 & 89.60 & 91.00 & 90.20 & 91.10 & 87.90 & 90.20 \\
          & JKNet & -     & -     & 90.50 & 91.30 & 90.60 & 91.20 & 89.90 & 91.50 & 89.20 & 91.30 & 90.60 & 91.60 \\
          & IncepGCN & -     & -     & 89.90 & 91.60 & 90.20 & 91.50 & 90.80 & 91.30 & OOM   & 90.50 & OOM   & 90.00 \\
          & GraphSAGE & 90.10 & 90.70 & 89.40 & 91.20 & 90.20 & 91.70 & 83.50 & 87.80 & 41.30 & 47.90 & 40.70 & 62.30 \\
    \hline
    \multirow{5}[2]{*}{Reddit} & GCN   & 96.11 & 96.13 & 96.62 & 96.71 & 96.17 & 96.48 & 67.11 & 90.54 & 45.55 & 50.51 & -   & - \\
          & ResGCN & -     & -     & 96.13 & 96.33 & 96.37 & 96.46 & 96.34 & 96.48 & 93.93 & 94.27 & -   & - \\
          & JKNet & -     & -     & 96.54 & 96.75 & 96.82 & 97.02 & OOM   & 96.78 & OOM   & OOM   & -  & - \\
          & IncepGCN & -     & -     & 96.48 & 96.77 & 96.43 & 96.87 & OOM   & OOM   & OOM   & OOM   & -  & - \\
          & GraphSAGE & 96.22 & 96.28 & 96.45 & 96.54 & 96.38 & 96.42 & 96.15 & 96.18 & 96.43 & 96.47 & - & - \\
    \hline
    \end{tabular}%
  \label{tab:dropvsnodrop}%
\end{table}%
\vspace*{\fill}
\end{landscape}
\end{document}